\theoremstyle{plain}
\newtheorem{thm}{Theorem}[section]
\newtheorem{prop}[thm]{Proposition}
\newtheorem{lem}[thm]{Lemma}
\theoremstyle{definition}
\theoremstyle{remark}
\DeclareMathOperator*{\argmax}{arg\,max}
\DeclareMathOperator*{\argmin}{arg\,min}
\DeclareMathOperator{\KL}{KL}
\DeclareMathOperator{\Diag}{Diag}
\newcommand{\cC}{\mathcal{C}}
\newcommand{\cR}{\mathcal{R}}
\newcommand{\cT}{\mathcal{T}}
\newcommand{\R}{\mathbb{R}}
\newcommand{\ROUND}{\rho}
\newcommand{\radius}{\delta}
\newcommand{\Esp}[2]{\mathbb{E}_{#1}\left[ #2 \right]}
\icmltitlerunning{Mirror Sinkhorn: Fast Online Optimization on Transport Polytopes}
\begin{document}

\twocolumn[
\icmltitle{Mirror Sinkhorn: Fast Online Optimization on Transport Polytopes}



\begin{icmlauthorlist}
\icmlauthor{Marin Ballu}{camb}
\icmlauthor{Quentin Berthet}{goog}
\end{icmlauthorlist}

\icmlaffiliation{goog}{Google DeepMind, Paris, FR}
\icmlaffiliation{camb}{University of Cambridge, UK}

\icmlcorrespondingauthor{Quentin Berthet}{qberthet@google.com}

\icmlkeywords{Machine Learning, ICML}

\vskip 0.3in
]



\printAffiliationsAndNotice{}  

\begin{abstract}
    Optimal transport is an important tool in machine learning, allowing to capture geometric properties of the data through a linear program on transport polytopes. We present a single-loop optimization algorithm for minimizing general convex objectives on these domains, utilizing the principles of Sinkhorn matrix scaling and mirror descent. The proposed algorithm is robust to noise, and can be used in an online setting. We provide theoretical guarantees for convex objectives and experimental results showcasing it effectiveness on both synthetic and real-world data.
\end{abstract}
\section{Introduction}
Optimal transport is a seminal problem in optimization \cite{monge1781memoire}, and an important topic in analysis \cite{villani2008optimal}. The discrete case is a linear program on the set of nonnegative matrices with fixed row and column sums \cite{kantorovich1942translocation}. This set forms the {\em transport polytope}, whose elements can be interpreted as the law of joint distributions for $(X, Y)$ with marginal distributions $\mu, \nu$.

In machine learning, OT has recently gained in importance following the work of \citet{cuturi2013sinkhorn}, presenting an entropic-regularized method and using the Sinkhorn algorithm to efficiently optimize it \citep[see, e.g.][and references therein for an overview of this topic and its applications]{peyre2019computational}. This method can also be used to solve the original OT problem \cite{altschuler2017near}, by setting a regularization parameter that depends on the cost matrix and desired precision level. We consider here the more general problem of convex optimization of \textbf{any convex function} on the transport polytope. This comprises several optimization problems, included but \textbf{not limited} to both optimal transport and its entropic regularized version. The problem for a quadratic function has been studied both for the purpose of registering point cloud \cite{grave2019unsupervised} (related to Gromov-Wasserstein problems \cite{memoli2011gromov, solomon2016entropic}) and for computing euclidean projection on the Birkhoff polytope \cite{li2020efficient}. It appears in statistical inference on random permutations \cite{birdal2019probabilistic}. Inference on random permutations can be obtained by minimizing various other convex functions \cite{linderman2018reparameterizing}. Optimisation on this polytope also arises when trying to both compute and minimize a Wasserstein distance or sum of Wasserstein distances of a set of parametrized distributions, e.g. in computation of Wasserstein estimators \cite{ballu2020stochastic,bassetti2006minimum}, private learning \cite{boursier2019private}, Wasserstein barycenters \cite{rabin2011wasserstein,agueh2011barycenters,cuturi2014fast}, topology learning \cite{le2023refined}, experiment design \cite{berthet2015resource}, and generative models. In the latter case, the distribution generated by a neural network is compared to the sample distribution with the 1-Wasserstein distance in WGAN \cite{arjovsky2017wasserstein} and Wasserstein autoencoders \cite{tolstikhin2017wasserstein}, or with a regularized version of the Wasserstein distance with Sinkhorn divergences \cite{genevay2018learning}. 

Several algorithms that have been suggested to solve optimal transport use iterated Bregman projections \cite{benamou2015iterative, dvurechensky2018computational}. To minimize general convex functions on transport polytopes, we extend the approach to a single loop iterated algorithm. 
Interpretations of Sinkhorn algorithm as mirror descent in the dual \cite{mishchenko2019sinkhorn} and the primal \cite{leger2021gradient, aubin2022mirror} have been used to derive convergence rates and to extend Sinkhorn to this more general problem. These extensions are supported by an analysis, assuming smoothness and strong convexity of the objective. Streaming iterations of Sinkhorn have been proposed by \citet{mensch2020online}, focusing solely on minimizing a regularized optimal transport problem.
General convex optimization algorithms can theoretically achieve the best asymptotic rates for optimal transport. An $\varepsilon$-close optimal transport plan can be obtained with accelerated gradient descent schemes \cite{dvurechensky2018computational, lin2019efficient, guo2020fast}, or accelerated alternative minimisation \cite{guminov2021combination}. These are only analyzed when given deterministic gradient updates, and their parameters depend on an imposed desired level of optimization precision $\varepsilon$.

Using the Sinkhorn algorithm to solve OT, as studied by \citet{altschuler2017near} requires to set a regularization parameter $\alpha >0$, as a function of the desired optimization precision. Indeed, this algorithm is tied to an entropic-regularized version of this problem, whose solution $\gamma_\alpha^*$ is different: there is a regularization bias. We note that this has some advantages. In particular, the solution is a continuously differentiable function of the problem inputs \cite{peyre2019computational}. This is part of a wide effort to create differentiable versions of discrete operators such as optimizers \cite{cuturi2017soft, berthet2020learning, blondel2020fast, vlastelica2019differentiation, paulus2020gradient}, to ease their inclusion in end-to-end differentiable pipelines that can be trained with first-order methods in applications \cite{cordonnier2021differentiable, kumar2021groomed, carr2021self, le2021differentiable, baid2022deepconsensus, llinares2021deep} and other optimization algorithms \cite{dubois2022fast}. This regularized objective, as well as alternate regularizations \cite{blondel2018smooth} fall within our framework, and can be optimized using our algorithm.

Our algorithm, which we call {\em Mirror Sinkhorn}, is based on the principles of mirror descent on the transport polytope (which requires an oracle to solve Bregman-regularized linear problems on this set), and the Sinkhorn algorithm which enforces normalization of rows and columns to satisfy the constraints. The use of multiple Sinkhorn steps for solving a mirror descent oracle has been proposed in \cite{alvarez2018structured}, and the convergence of the resulting algorithm has been further analysed in \cite{xie2020fast} and later in \cite{aubin2022mirror} for smooth and strongly convex objectives. We provide an analysis for a single step of Sinkhorn normalization between gradient updates.

\paragraph{Our contributions.}
The \emph{Mirror Sinkhorn} algorithm takes stochastic gradients as input, is adaptive to a change of objective, and its parameters are independent of the required precision. In summary, we make the following contributions
\begin{itemize}
    \item We introduce a \textbf{single-loop}, practically efficient algorithm for optimization on the transport polytope.

     \item We provide theoretical guarantees for the \textbf{performance} under various assumptions, including OT (linear).

    \item We show that this algorithm can be adapted to handle different scenarios such as stochastic gradients, online settings, and related tensor problems.
\end{itemize}

\paragraph{Notations.} The standard Euclidean product for vectors and matrices is denoted by $\langle\cdot,\cdot\rangle$. For a positive integer $n$, we denote by $1_n = (1,\dots, 1)$ the vector of $\R^n$ with all ones, and by $[n]$ the set of integers from $1$ to $n$, included. We denote by $\Delta_n$ the probability simplex, defined as
\begin{equation*}
    \Delta_n = \big\{\nu \in\R^{n}: \nu \ge 0, \ \langle 1_n, \nu \rangle = 1 \big\}\, .
\end{equation*}
The analogue for probability matrices in $\R^{m \times n}$ (resp. tensors in $\R^{m_1 \times \ldots \times m_d}$) is defined {\em mutatis mutandis} and denoted by $\Delta_{m \times n}$ (resp. $\Delta_{m_1 \times \ldots \times m_d}$). For two reals $a, b$ we denote by $a \wedge b$ the minimum of $a$ and $b$. We extend this notation to vectors, meaning the entrywise minimum. The operator $\Diag(\cdot)$ yields a diagonal matrix in $\R^{m\times m}$ from a vector in $\R^m$, with diagonal entries $(\Diag(x))_{ii} = x_i$. We note the entrywise product for matrices $(A\odot B)_{ij} = A_{ij}B_{ij}$ and the entrywise division $(A\oslash B)_{ij} = A_{ij} / B_{ij}$. The notations  $\exp(A)$ and $\log(A)$ are used for the entrywise exponential and natural logarithm functions, i.e. $(\exp(A))_{ij} = e^{A_{ij}}$,  $(\log(A))_{ij} = \log({A_{ij}})$. The transpose of a matrix $A$ is noted $A^\top$.
We denote by $\Vert\cdot\Vert_\infty$ and $\Vert\cdot\Vert_1$ respectively the entrywise $\ell_1$ and $\ell_\infty$ norms on vectors and matrices.

\section{Problem and methods}
For two positive integers $m, n\geq 1$, let $\mu \in\R^m$ and $\nu \in\R^n$ be two probability distributions on $[m]$ and $[n]$ respectively, represented as vectors, elements of the simplexes $\Delta_m$ and $\Delta_n$. The transport polytope between $\mu$ and $\nu$, denoted by $\mathcal{T}(\mu, \nu)$, is a subset of $\Delta_{m \times n}$, set of $m\times n$ probability matrices. It contains all probability matrices $\gamma$ whose rows sum to $\mu$ and columns sum to $\nu$. It can be interpreted as the set of couplings, i.e. joint distributions between two variables with fixed marginals $\mu$ and $\nu$, defined as
\begin{equation}
    \mathcal{T}(\mu, \nu) = \left\{\gamma\in\R_+^{m\times n}:\ \gamma 1_n = \mu,\ \gamma^\top1_m = \nu\right\}.
\end{equation}
It is the intersection of the simplex $\Delta_{m\times n}$ with affine spaces given by $\cR(\mu) = \{\gamma \in \R^{m \times n}: \gamma 1_n=\mu\}$ and $\cC(\nu) = \{\gamma \in \R^{m \times n}:\gamma^\top 1_m=\nu\}$.
We consider in this work the problem of convex optimisation on the transport polytope:
\begin{equation}\label{opt_polytope}
    \min_{\gamma\in \mathcal{T}(\mu, \nu)} f(\gamma),
\end{equation}
where $f :\Delta_{m \times n} \to \R$ is a real-valued differentiable convex function that is defined on the set of probability matrices. The optimal transport problem is the particular case of \eqref{opt_polytope} where $f$ is linear:
\begin{equation}\label{kantorovitch}
        \min_{\gamma\in \mathcal{T}(\mu, \nu)} \langle C,\gamma\rangle.
\end{equation}

\subsection{Mirror descent and Sinkhorn.} A method that can be used to solve constrained convex optimization problems such as \eqref{opt_polytope} is Mirror Descent \cite{beck2003mirror}. For a convex differentiable function $\psi:\Delta_{m\times n}\to\R$, we define a Bregman divergence
\begin{equation*}
    D_\psi(x, y) = \psi(x) - \psi(y) - \langle\nabla\psi(y),x-y\rangle.
\end{equation*}
 The function $\psi$ plays the role of a barrier function, it is such that for every compact $K\subset \R$, its inverse image $\psi^{-1}(K)=\{x\in\Delta_{m\times n}:\ \psi(x)\in K\}$ is in the interior of the polytope of constraints. We recall that $\cT(\mu, \nu)$ is the intersection of the probability simplex $\Delta_{m\times n}$ with the affine spaces $\cR(\mu)$ and $\cC(\nu)$. A choice of function $\psi$ that guarantees that the iterates are in the interior of the simplex $\Delta_{m\times n}$ is the negative entropy, defined by
\begin{equation*}
    \forall \gamma\in\R^{m\times n}_+,\ \psi(\gamma) = -H(\gamma)=\langle \gamma, \log\gamma\rangle,
\end{equation*}
and the resulting Bregman divergence is the relative entropy
\begin{equation*}
    D_{\KL{}}(A, B) = \langle A, \log(A)-\log(B)\rangle + \langle B - A, 1_{m\times n}\rangle.
\end{equation*}
Under these conditions, a mirror descent algorithm for \eqref{opt_polytope} would define its iterates by
\begin{equation}
\label{eq:mirror-descent}
    \gamma_{t+1} = \argmin_{\mathcal{T}(\mu, \nu)} \left\{ \eta_t \langle \gamma, \nabla f(\gamma_t)\rangle + D_{\KL{}}(\gamma, \gamma_t)\right\}
\end{equation}
with $\eta_t$ the step-size at time $t$.
This update requires solving an entropic-regularised optimal transport problem at each iterate
\begin{equation*}
    \gamma_{t+1} = \argmin_{\mathcal{T}(\mu, \nu)} \left\{ \langle \gamma, C_t\rangle -\alpha_t H(\gamma)\right\}
\end{equation*}
with regularisation parameter $\alpha_t = 1/\eta_t$ and cost matrix $C_t = \nabla f(\gamma_t) - \log(\gamma_t)/\eta_t$. However, the oracle for this problem is not explicit. Solving this step approximately within a mirror descent loop has been used in Gromov-Wasserstein problems
\cite{peyre2016gromov}. State of the art algorithms to solve this intermediate problem involve accelerated gradient descent schemes \cite{dvurechensky2018computational, lin2019efficient}. A simple and popular algorithm to tackle this inner problem is the Sinkhorn matrix scaling algorithm \cite{sinkhorn1964relationship,cuturi2013sinkhorn}. It consists in projecting alternatively an initial matrix $\gamma_1 = e^{-C/\alpha}$ on the marginal spaces with Bregman projections: if $t$ is odd
\begin{equation*}
    \gamma_{t+1} = \argmin_{\gamma 1_n = \mu} \left\{D_{\KL{}}(\gamma, \gamma_t)\right\}
\end{equation*}
and if $t$ is even
\begin{equation*}
    \gamma_{t+1} = \argmin_{\gamma^\top 1_m = \nu} \left\{D_{\KL{}}(\gamma, \gamma_t)\right\}.
\end{equation*}
Each of these iterates only relies on proportionately scaling the rows and columns of the matrix. An algorithm using several steps of the Sinkhorn algorithm to approximate mirror descent uses \textbf{nested loops}.

\subsection{Mirror Sinkhorn.}
\label{sec:test}
The algorithm that we propose consists of alternating a step of entropic mirror descent and a step of Sinkhorn algorithm. Similarly to mirror descent, it can be written in a proximal form where the optimisation is performed on each marginal at a time: if $t$ is even, 
\begin{equation*}
    \gamma_{t+1} = \argmin_{\gamma 1_n = \mu} \left\{ \eta_t \langle \gamma, \nabla f(\gamma_t)\rangle + D_{\KL{}}(\gamma, \gamma_t)\right\}
\end{equation*}
if $t$ is odd,
\begin{equation*}
    \gamma_{t+1} = \argmin_{\gamma^\top 1_m = \nu} \left\{ \eta_t \langle \gamma, \nabla f(\gamma_t)\rangle + D_{\KL{}}(\gamma, \gamma_t)\right\}.
\end{equation*}
Formally, Mirror Sinkhorn algorithm is defined in the following manner (see Figure~\ref{fig:alg-explain} for an illustration).
\begin{algorithm}

\caption{Mirror Sinkhorn}
\label{alg:main}
\KwData{Initialise $\gamma_1 =\overline{\gamma}_1= \mu\nu^\top$, define stepsize $\eta_t$.}
\For{$1\leq t\leq T$}{
Update $\gamma_{t+1}' = \gamma_{t} \odot \exp\left(- \eta_t\nabla f(\gamma_t)\right),$\\
\eIf{$t$ is even}{
Rows: 
$\gamma_{t+1} = \Diag\left(\mu\oslash(\gamma_{t+1}'1_n)\right)\gamma_{t+1}'$,
}{
Cols: $\gamma_{t+1} = \gamma_{t+1}'\Diag\left(\nu\oslash((\gamma_{t+1}')^\top 1_m)\right)$,
}
Update $\overline{\gamma}_{t+1} = \frac{t}{t+1}\overline{\gamma}_t + \frac{1}{t+1}\gamma_{t+1}$
}
\KwOut{$\overline{\gamma}_T = \frac{1}{T}\sum_{t=1}^T\gamma_t$ }

\end{algorithm}
\begin{figure}[ht]
\vskip 0.2in
\begin{center}
\centerline{\includegraphics[width=0.8\columnwidth]{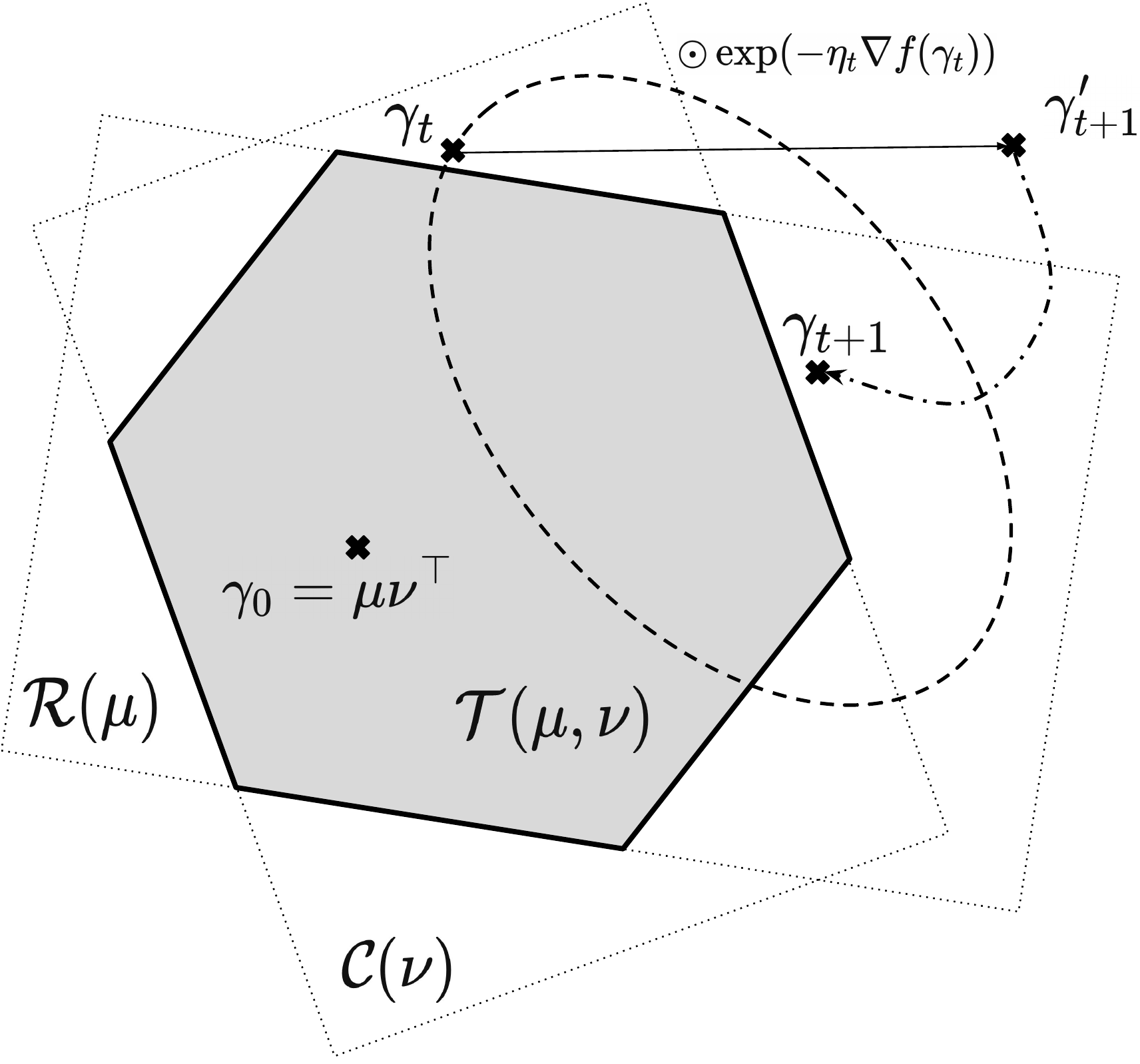}}
\caption{An illustration of the {\em Mirror Sinkhorn} algorithm: At each iteration, a gradient step on $\gamma_t$ yields the intermediate $\gamma'_{t+1}$, which is projected by row or column scaling onto either $\cR(\mu)$ or $\cC(\nu)$.}
\label{fig:alg-explain}
\end{center}
\vskip -0.2in
\end{figure}

Each step of the algorithm has a time complexity of $O(mn)$. Similarly to the Sinkhorn algorithm, this algorithm can be parallelized \cite{cuturi2013sinkhorn}. Importantly, this a \textbf{single-loop} algorithm, in \textbf{stark contrast} with nested loop algorithms that rely on iterative algorithms within a loop of gradient updates. Our algorithms handles the affine constraints given by $\cR(\mu)$ and $\cC(\nu)$ by a \textbf{single and explicit normalization}. This makes the algorithm particularly easy to implement and efficient. Theoretical results providing convergence rates under different assumptions are provided in Section~\ref{sec:th}. Some settings require minor variations (e.g. use of stochastic gradients, absence of running averaging). Even though we describe them in the corresponding sections, we also include them in the appendix for the sake of completeness.

\paragraph{Regularizing the marginals.} One approach to guarantee that the constraints are satisfied is to add a positive regularization term that is minimized when the constraints are satisfied. The following proposition states that Algorithm \ref{alg:main} has the same iterates whether it is based on the regularized objective or the unregularized objective. 
\begin{prop}
\label{prop:reg}
Let $f:\Delta_{m\times n}\to\R$ be a differentiable convex function, $R_\mu: \Delta_m\to \R_+$ be a differentiable convex function such that $\nabla R_\mu(\mu')=0$ if and only if $\mu' = \mu$, and $R_\nu: \Delta_n\to \R_+$ be defined analogously. We define the following regularized objective
\begin{equation*}
    f^{\text{reg}}: \gamma\mapsto f(\gamma)+R_\mu(\gamma 1_n)+R_\nu(\gamma^\top 1_m).
\end{equation*}
Let $(\gamma_t)_{t\geq 1}$ be the iterates of Mirror Sinkhorn with objective $f$ and let $(\gamma_t^{\text{reg}})_{t\geq 1}$ be the iterates with objective $f^{\text{reg}}$. Then
\begin{equation*}
    \forall t\geq 1,\ \gamma_t^{\text{reg}} = \gamma_t.
\end{equation*}
\end{prop}

\paragraph{Rounding for constraints satisfaction} The output of our algorithm does not necessarily belong to $\mathcal{T}(\mu, \nu)$. This can easily be remedied by applying an elegant rounding algorithm by \citet{altschuler2017near} to obtain a nearby matrix that belongs to the transport polytope.
\begin{algorithm}[hbt!]
\caption{Rounding algorithm \cite{altschuler2017near}}\label{alg:round}
\KwData{Matrix $\gamma$, target marginals $\mu, \nu$.}
Normalise rows $\gamma' = \Diag\left(\frac{\mu}{\gamma1_n}\wedge 1_m\right)\gamma$,

Normalise columns $\gamma'' = \gamma'\Diag\left(\frac{\nu}{(\gamma')^\top 1_m}\wedge 1_n\right)$,

\KwOut{$\ROUND(\gamma) = \gamma'' + \left(\frac{\mu - \gamma''1_n}{\Vert \mu - \gamma''1_n \Vert_1}\right)(\nu - (\gamma'')^T1_m)^\top$ }
\end{algorithm}

One of the appeals of this function is that the $\ell^1$ distance in $\R^{m \times n}$ between its input and its output can be controlled by the $\ell^1$ distance between their marginals in $\R^m$ and $\R^n$ 
 \begin{equation*}
    \Vert\ROUND(\gamma) - \gamma\Vert_1 \leq 2\left(\Vert\gamma 1_n-\mu\Vert_1+\Vert\gamma^\top1_m-\nu\Vert_1\right)\, ,
\end{equation*}
and that its output is in the transport polytope
\begin{equation*}
    \forall \gamma\in\R^{m\times n}_+,\ \ROUND(\gamma)\in\mathcal{T}(\mu, \nu).
\end{equation*}
This algorithm has complexity $O(mn)$, which implies that it does not add asymptotic complexity if applied at the end of Algorithm \ref{alg:main}. It is also parralelization-friendly. 

To keep track of the constraint violation in the theoretical guarantees, we define $c(\cdot): \Delta^{m \times n} \to \R$ by
\begin{equation}
\label{eq:constraint}
    c(\gamma) = \Vert\gamma1_n-\mu\Vert_1 + \Vert\gamma^\top1_m-\nu\Vert_1.
\end{equation}
We also define for $\mu \in \Delta_m, \nu \in \Delta_n$ the constant $\delta$ appearing in our theoretical guarantees
\begin{equation}
    \label{eq:radius}
\radius = \Vert \log \mu\Vert_\infty + \Vert \log \nu\Vert_\infty\, .
\end{equation}

\paragraph{Use for optimal transport problem} As described above, this algorithm can be used to tackle the OT problem. In order to provide an $\varepsilon$-close optimal transport plan, the Sinkhorn algorithm is initialized using the matrix $(e^{-C_{i,j}/\alpha})_{i,j}$ with $\alpha/\varepsilon$ being a constant which depends on the choice of $\mu$ and $\nu$ - see \cite{altschuler2017near}. The target error needs to be known at initialization. In contrast, the Mirror Sinkhorn algorithm's initialization and stepsizes do not depend on $\varepsilon$.

Another widely used method for improving the computation of optimal transport problems is to transform the problem into that of sliced Wasserstein distance, considering averages over lower-rank projections \cite{bonneel2015sliced, kolouri2019generalized, le2019tree, nadjahi2021fast, niles2022estimation}. There are some conceptual similarities with certain aspects of our method, e.g. when the use of lower-rank estimates of the gradient are used in a stochastic setting (see discussion in Section~\ref{sec:stoch_setting}).
\section{Theoretical guarantees}\label{sec:th}
We present theoretical guarantees on the performance of the Mirror Sinkhorn algorithm in several settings. The proposed algorithm is evaluated under a variety of conditions, with small adaptations in each case. Proofs are in the Appendix.

\subsection{Online setting}

The Mirror Sinkhorn algorithm can be used in an online setting, which is most general and presented first. In it, there is not a unique function $f$ but a stream of functions $f_t$ and the performance of the algorithm is evaluated as a regret bound \citep[see, e.g.][]{bubeck2012regret}. The gradient can be replaced by $\nabla f_t(\gamma_t)$, to run the algorithm in this setting. The bounds on the worst-case regret shown in the following illustrate the claim that  Mirror Sinkhorn is adaptive. 

\begin{algorithm}
\caption{Online Mirror Sinkhorn}\label{alg:online}
\KwData{Initialise $\gamma_1 =\overline{\gamma}_1= \mu\nu^\top$, define stepsize $\eta_t$, stream of loss functions $(f_t)$.}
\For{$1\leq t\leq T$}{
Update $\gamma_{t+1}' = \gamma_{t} \odot \exp\left(- \eta_t\nabla f_t(\gamma_t)\right),$\\
\eIf{$t$ is even}{
Rows: $\gamma_{t+1} = \Diag\left(\mu\oslash(\gamma_{t+1}'1_n)\right)\gamma_{t+1}'$,
}{
Cols: $\gamma_{t+1} = \gamma_{t+1}'\Diag\left(\nu\oslash((\gamma_{t+1}')^\top 1_m)\right)$,
}
}
\KwOut{$\gamma_T$ }

\end{algorithm}

\begin{thm}
\label{thm:online}
For $(f_t)_{t\geq 1}$ a sequence of convex functions that are $B$-Lipschitz for the norm $\Vert\cdot\Vert_1$, let
$\radius$ be as defined in Equation~(\ref{eq:radius}), $\eta_t = \frac{1}{B}\sqrt{\frac{\radius}{t}}$. Then, the iterates $(\gamma_t)_{t\geq 1}$ of Mirror Sinkhorn as described in Algorithm~\ref{alg:online} satisfy the following regret bound:
\begin{equation*}
\max_{\gamma\in\mathcal{T}(\mu, \nu)}
    \sum_{t=1}^T\left(f_t(\gamma_t) - f_t(\gamma) \right)\leq  \frac{9B}{8}\sqrt{\radius T}(2+\log(T)),
\end{equation*}
and the constraints satisfy
\begin{equation*}
    \sum_{t=1}^Tc(\gamma_t) \leq  \frac{3}{2}\sqrt{\radius T}(2+\log(T)),
\end{equation*}
with $c(\cdot)$ as defined in Equation (\ref{eq:constraint}).

Rounding at each $t$, $\tilde \gamma_t = \ROUND(\gamma_t)$ with Alg.~\ref{alg:round}, it holds that
\begin{equation*}
\max_{\gamma\in\mathcal{T}(\mu, \nu)} 
    \sum_{t=1}^T\left(f_t(\tilde \gamma_t) - f_t(\gamma)\right) \leq  \frac{9B}{8}\sqrt{\radius T}(2+\log(T)).
\end{equation*}
\end{thm}

\subsection{Stochastic setting}
\label{sec:stoch_setting}

The Mirror Sinkhorn algorithm can be used both with deterministic and stochastic gradient updates on the function $f$. In the first case, as described in Algorithm~\ref{alg:main}, the updates are given by $\nabla f(\gamma_t)$, and in the stochastic case by $g_t$ satisfying $\Esp{}{g_t\vert \gamma_t} = \nabla f(\gamma_t)$. This minor adaptation is fully described in Algorithm~\ref{alg:stoch} in the appendix for completeness.

This setting is common in stochastic optimization and allows this algorithm to be used in learning tasks where the function $f$ is the expectation of a data-dependent loss, and the $g_t$ are gradients of this loss for one data observation (or a mini-batch thereof). This also illustrates that the algorithm is not sensitive to noise in the gradients. The complexity of each step of the stochastic algorithm can be further reduced by subsampling the gradient. To summarize, this setting can be motivated in several situations:
\begin{itemize}
    \item Gradient subsampling on a random set of indices $S$, 
    \begin{equation*}
        g_t = \frac{mn}{\vert S\vert}\sum_{(i, j)\in S}(\nabla f(\gamma_t))_{ij} e_{ij}\, .
    \end{equation*}
    \item Gradient observation with random additive noise 
    \begin{equation*}
        g_t = \nabla f(\gamma_t) + \sigma Z_t\, .
    \end{equation*} 
    such that $\Esp{}{Z_t}=0$ and $\Esp{}{\Vert Z_t\Vert_\infty^2}<\infty$.
    \item For OT problems, the objective function is $f:\gamma\mapsto\langle \gamma, C\rangle$, and its gradient $\nabla f(\gamma)$ given by the cost matrix $C$ for all $\gamma$. In the 2-Wasserstein distance (Euclidean cost), it is equivalent to having $C_{ij} = - \langle x_i, y_j\rangle$. If we observe $(X_i)$, $(Y_j)$ two independent families of  random vectors in $\R^d$ such that $\Esp{}{X_i}=x_i\in \R^d$, $\Esp{}{Y_j}=y_j\in \R^d$, we can use
    \begin{equation*}
        (g_t)_{ij} = - \langle X_i, Y_j\rangle\, ,
    \end{equation*} 
    as an example of $g_t = C_t$ of stochastic observation of the cost in an optimal transport problem.

\end{itemize}

All the results presented here can be directly applied to the deterministic case by setting $\sigma = 0$. All convergence bounds are anytime, meaning that the number of iterations $T$ is not known when choosing the step-size. As a consequence, there is an additional logarithmic term in the rate of convergence. This term can be removed if the stepsize is chosen to be constant in $t$, but dependent on $T$. 
\begin{thm}\label{thm:stoch}
For $f$ convex and $B$-Lipschitz for the norm $\Vert\cdot\Vert_1$, and
\begin{equation*}
    \Esp{}{\Vert g_t - \nabla f(\gamma_t)\Vert_\infty^2\vert \gamma_t}\leq \sigma^2,
\end{equation*}
let $B_\sigma = \sqrt{B^2 +\sigma^2}$, $\eta_t = \frac{1}{B_\sigma}\sqrt{\frac{\radius}{t}}$. Then, taking $\overline{\gamma}_T$ to be the output  after $T$ steps of the Mirror Sinkhorn as described in Algorithm~\ref{alg:stoch} and $\tilde \gamma_T = \ROUND(\overline{\gamma}_T)$, it holds that
\begin{equation*}
    \Esp{}{f(\tilde \gamma_T) - f(\gamma^*)} \leq  \frac{B_\sigma}{2}\sqrt{\frac{\radius}{T}}(1+\log(T)).
\end{equation*}
With constant stepsize $\eta_t = \frac{1}{B_\sigma}\sqrt{\frac{2\radius}{T}}$ we have
\begin{equation*}
    \Esp{}{f(\tilde \gamma_T) - f(\gamma^*)} \leq  B_\sigma\sqrt{\frac{2\radius}{T}}.
\end{equation*}
\end{thm}

\subsection{Optimal Transport}
The Mirror Sinkhorn algorithm can be applied to the Optimal Transport problem (OT) described in \eqref{kantorovitch}, when $f$ is a linear form. The Sinkhorn algorithm is widely used to tackle this problem, but suffers from some limitations in its most common form. First, it requires to have access to the exact cost matrix at the start of the algorithm. In contrast, as shown above, we can apply Mirror Sinkhorn to a stream of random cost matrices (stochastic gradients of the linear problem - see Algorithm \ref{alg:stoch}, \ref{alg:ot} for details). Further, the Sinkhorn algorithm solves an entropic-regularized version of OT \cite{cuturi2013sinkhorn} and as such suffers from a regularization bias for any fixed $\alpha>0$ - which must be set small enough, as a function of the desired precision $\varepsilon >0$ for OT \cite{altschuler2017near}. As noted above, this modification of the problem can be considered ``a feature rather than a bug'', as for any fixed $\alpha > 0$, the new solution $\gamma_\alpha^*$ has enviable properties, such as differentiability. Such regularized objectives can also be tackled by Mirror Sinkhorn, as described in Section~\ref{sec:strong-cvx}


Applying Mirror Sinkhorn either with $\nabla f(\gamma) = C$ (deterministic case) or $g_t = C_t$ (stochastic case, as in Algorithm~\ref{alg:stoch}), with successive random cost matrices $C_t$, allows to solve OT, with theoretical guarantees given in the following result, a corollary of Theorem~\ref{thm:stoch} (see Algorithm~\ref{alg:ot} in the appendix for full details).

\begin{thm}\label{thm:ot_round}
Let $(C_t)_{t\geq 1}$ be a sequence of random cost matrices such that $\Esp{}{C_t} = C$, for some cost matrix \mbox{$C\in\R^{m\times n}$} satisfying $\Vert C\Vert_\infty\leq 1$, and $\Esp{}{\Vert C_t - C\Vert_\infty^2}\leq \sigma^2$.
Setting  $\eta_t = \sqrt{\frac{\radius}{(1+\sigma^2)t}}$ in Mirror Sinkhorn as described in Algorithm~\ref{alg:ot}, for the output $\overline{\gamma}_T$ after $T$ steps and $\tilde \gamma_T = \ROUND(\overline{\gamma}_T)$, it holds that
\begin{equation*}
    \Esp{}{\langle C, \tilde \gamma_T\rangle} - \langle C, \gamma^*\rangle \leq  \frac{9}{8}\sqrt{\frac{(1+\sigma^2)\radius}{T}}(2+\log(T)).
\end{equation*}
\end{thm}
\paragraph{Constant stepsize.} We also provide the result for constant stepsize, which is asymptotically the same as the rate for Sink/Greenkhorn \cite{lin2019efficient}, with optimized constant.
\begin{thm}\label{thm:ot_round_fixedstep}
Let $(C_t)_{t\geq 1}$ be a sequence of random cost matrices with $\Esp{}{C_t} = C$, for $C\in\R^{m\times n}$ that satsifies $\Vert C\Vert_\infty\leq 1$, and $\Esp{}{\Vert C_t - C\Vert_\infty^2}\leq \sigma^2$.
For $\varepsilon>0$, and $\eta_t = \varepsilon\sqrt{\frac{\radius}{1+\sigma^2}}$. Then, taking the output $\overline{\gamma}_T$ after $T$ steps of Mirror Sinkhorn as described in Algorithm \ref{alg:ot} and $\tilde \gamma_T = \ROUND(\overline\gamma_T)$, it holds for $T\geq 5(1+\sigma^2)\radius\varepsilon^{-2}$ that
\begin{equation*}
    \Esp{}{\langle C, \ROUND(\tilde\gamma_T)\rangle} - \langle C, \gamma^*\rangle\leq \varepsilon\, .
\end{equation*}

\end{thm}

\subsection{Strong Convexity and Smoothness}
\label{sec:strong-cvx}

We consider here the specific case where $f$ is both $\ell$-strongly convex and $L$-smooth w. r. t. the relative entropy, i.e.
\begin{equation*}\label{eqconvex}
    f(\gamma')-f(\gamma) - \langle\nabla f(\gamma), \gamma'-\gamma\rangle \geq \ell D_{\KL} (\gamma', \gamma)
\end{equation*}
and
\begin{equation*}\label{eqsmooth}
    f(\gamma')-f(\gamma) - \langle\nabla f(\gamma), \gamma'-\gamma\rangle \leq LD_{\KL} (\gamma', \gamma)\, .
\end{equation*}
Under such assumptions, the algorithm converges at a faster rate, which fits with the results on the convergence of the Sinkhorn algorithm for an entropy-regularized objective.


\begin{thm}\label{thm:strong}
For $f$ being $\ell$-smooth and $L$-strongly convex with respect to the relative entropy, let $\eta_t = \frac{1}{\ell t}$. Then, taking $\overline{\gamma}_T$ the output of Mirror Sinkhorn (Algorithm \ref{alg:main}) after $T$ steps , it holds for $B = \Vert \nabla f(\gamma^*)\Vert_\infty$ that
\begin{equation*}
    f(\overline\gamma_T)-f(\gamma^*) + 2Bc(\overline\gamma_T) \leq \frac{(2B+L)^2}{8\ell T}(1+\log(T)),
\end{equation*}
with $c(\gamma) = \Vert \gamma 1_m-\mu\Vert_1+\Vert \gamma^\top 1_n-\nu\Vert_1$.

\end{thm}
Note that by definition of the relative entropy as Bregman divergence of the negative entropy $\gamma\mapsto -H(\gamma)$, the latter is $1$-strongly convex and 1-smooth with respect to the former. Thus, any $f$ of the form
\begin{equation*}
    f(\gamma) = \langle C,\gamma\rangle - \alpha H(\gamma)
\end{equation*}
is $L$-smooth and $\ell$-strongly convex with $L = \ell = \alpha$. Theorem~\ref{thm:strong} therefore applies to entropic-regularized OT.

\subsection{Tensor Case}
The Mirror Sinkhorn algorithm can be applied to a generalization involving probability tensors with multiple marginal constraints. Let $m_1, \dots, m_d\geq 1,$ be $d$ positive integers. As noted in the definitions, the set of probability tensors (nonnegative tensors with entries summing to $1$) is denoted by $\Delta_{m_1\times\cdots\times m_d}$. For $\mu_1\in\Delta^{m_1}, \dots, \mu_d\in\Delta^{m_d}$ probability vectors, the multiple transport polytope with marginals $\mu_1, \dots, \mu_d$ is the set of  tensors defined by 
\begin{equation*}
    \mathcal{T}(\mu_1,\dots,\mu_d) = \big\{\gamma\in\R^{m_1\times\cdots\times m_d}_+:\forall k,\, S_k(\gamma) = \mu_k\big\}, 
\end{equation*}
where $S_k(\gamma)$ is the sum of  $\gamma$ across all dimensions but $k$:
\begin{equation*}
    \forall{1\leq i_k\leq m_k},\ (S_k(\gamma))_{i_k} = \sum_{j: j_k = i_k}\gamma_{j_1,\dots,j_d}.
\end{equation*}
For a convex function $f:\Delta_{m_1\times\cdots\times m_d}\to\R$, the optimization problem described in Equation \eqref{opt_polytope} generalizes to
\begin{equation*}
    \min_{\gamma\in \mathcal{T}(\mu_1,\dots,\mu_d)} f(\gamma).
\end{equation*}
The Mirror Sinkhorn algorithm can also be adapted to tackle these problems, choosing at each step the dimension along which there is the largest constraint variation to normalize (see Algorithm~\ref{alg:tensor} in the Appendix for a full description).

\begin{thm}\label{thm:tensor}
Let $f :\Delta_{m_1\times\ldots\times m_d} \to \R$ convex and $B$-Lipschitz with respect to the norm $\Vert\cdot\Vert_1$, and with \mbox{$\radius = \sum_{k=1}^d\Vert \log \mu_k\Vert_\infty$},  $\eta_t = \frac{1}{B}\sqrt{\frac{\radius}{t}}$. Taking $\overline{\gamma}_T$ the output after $T$ steps of Algorithm \ref{alg:tensor}, it holds that
\begin{equation*}
    f(\overline{\gamma}_T)-f(\gamma^*) \leq \frac{9B}{8}\sqrt{\frac{\radius}{T}}\left(2+\log(T)\right)\, ,
\end{equation*}
and the constraints violations are bounded as follows
\begin{equation*}
    \sum_{k=1}^d\Vert S_k(\overline{\gamma}_T) - \mu_k\Vert_1\leq \frac{3}{2}\sqrt{\frac{\radius}{T}}\left(2+\log(T)\right).
\end{equation*}
\end{thm}

\begin{figure*}[!ht]
    \centering
    \includegraphics[width=\textwidth]{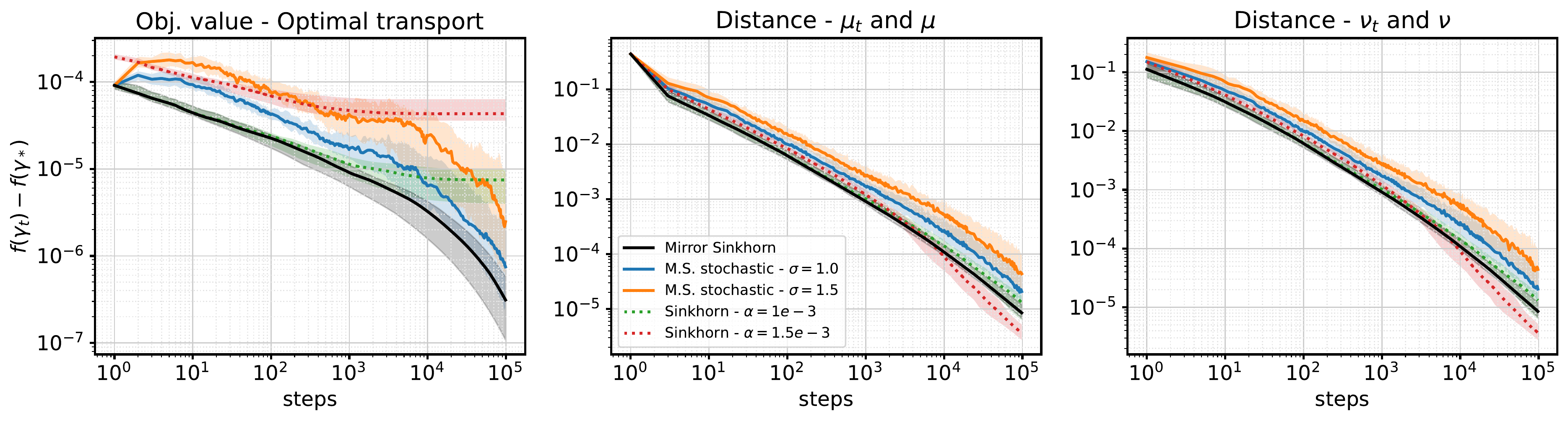}
        \vspace{-0.7cm}
    \caption{Performance of the Mirror Sinkhorn algorithm on the optimal transport problem, i.e. when $f$ is linear. Shown are exact gradient oracles (black) and stochastic (blue, orange). The experiment is described in Section~\ref{sec:ot-expe}. Our proposed algorithm is compared to the Sinkhorn algorithm (red, green, dotted lines) with several values of regularization parameter.
    \textbf{Left:} The value of $f(\gamma_t)$ as a function of $t$ - Note $f(\gamma^*) = 0$. \textbf{Center / Right:} The $\ell_1$ distance between the two marginals $(\mu_t, \nu_t)$ of $\gamma_t$ and constraints $(\mu, \nu)$ at any step.
    }
    \label{fig:ot-graphs}
\end{figure*}

\begin{figure*}[!ht]
    \centering
    \includegraphics[width=\textwidth]{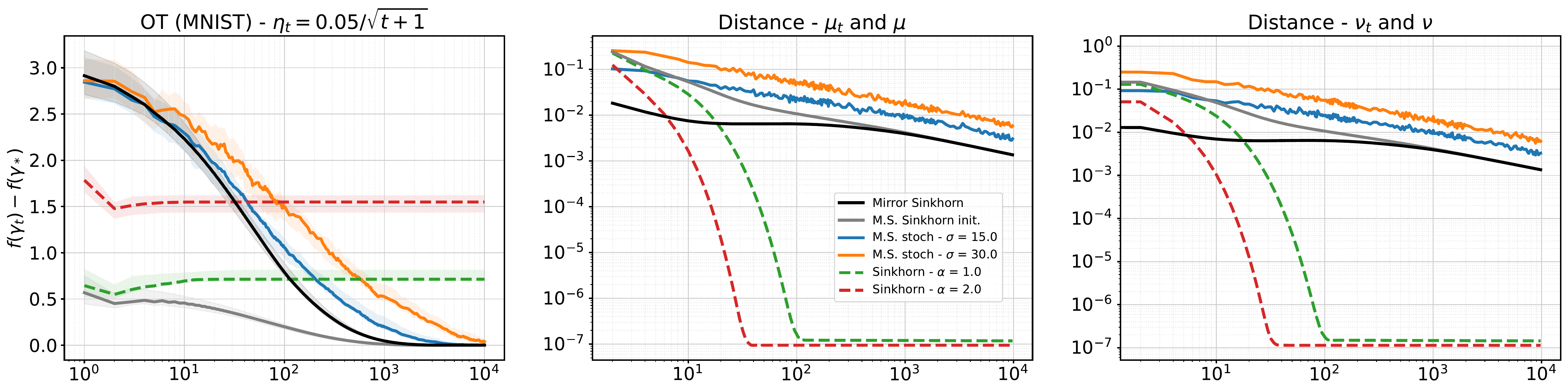}
        \vspace{-0.7cm}
    \caption{Performance of Mirror Sinkhorn on OT for MNIST data \citep{altschuler2017near}. See Figure~\ref{fig:ot-graphs} and Section~\ref{sec:ot-expe} for details.}
    \label{fig:ot-real-graphs}
\end{figure*}
\section{Experiments}
\subsection{Optimal transport}
\label{sec:ot-expe}

We present the performance of Mirror Sinkhorn on a linear objective, i.e. the optimal transport problem
\[
\min_{\gamma \in \mathcal{T}(\mu, \nu)} \langle C, \gamma \rangle\, .
\]
We take $m,n=100$, random $C \in \R^{m \times n}$ with independent off-diagonal coefficients in $[0,1]$ and zero diagonal, $\mu = \nu$ random, so that the optimizer and the optimal value of the problem is known, and equal to $0$ (with a different $C$ for each run).

We compare the performance of our algorithm to that of Sinkhorn's algorithm for different values of regularization parameter $\alpha>0$, running for $T = 10^5$ steps. The number of steps is chosen very large to illustrate the convergence rate, but this is not required to obtain a low optimization error, as shown in these experiments. We run this experiment both with $C_t = \nabla f(\gamma_t) = C$ (where the gradient is exact), and $C_t = C + \sigma Z_t$, where $Z_t$ has i.i.d. coefficients (stochastic gradients). These are reported in Figure~\ref{fig:ot-graphs}, for $32$ runs, with median and 10th-90th percentile. Our conclusion is that Mirror Sinkhorn is a fast and efficient algorithm to solve the optimal transport problem. In particular, it does not suffer from a regularization bias, and $\langle C, \gamma_t\rangle$ converges to its optimal value. There are two main advantages compared to using Sinkhorn for the optimal transport are: First, it is adaptive, there is no need to have a fixed desired precision, and to derive an corresponding regularization parameter. Second, it is an online algorithm, that can run on a stream of stochastic observations of $C$. This is not possible for some of the algorithms to either solve OT or its entropic regularized counterpart, including the Sinkhorn algorithm.

A consequence of the convergence of $\gamma_t$ to $\gamma^*$ is the slow progress for the two marginal constraints: as shown in our results, the violation in the constraint is polynomially decreasing in $t$, rather than the linear convergence (i.e. exponentially decreasing in $t$) of the Sinkhorn algorithm \cite{birkhoff1957extensions, carlier2022linear}. This phenomenon is particularly visible when the entropic regularization parameter $\alpha>0$ is higher, yielding solutions that are further from the boundary. The linear convergence is driven by a constant quantifying the distance of $\gamma^*_\alpha$ from the boundary of $\mathcal{T}(\mu, \nu)$, which therefore increases with $\alpha$. This is visible in Figure~\ref{fig:ot-graphs} and~\ref{fig:ot-real-graphs}, for the different values of $\alpha$, highlighting the inherent trade-off between regularization bias and constraint violation.

We also include an illustration of our method on two datasets used in \citet{altschuler2017near}, following their experimental setup: we use as instances of OT random pairs from MNIST (10 in total), and simulated SQUARES data consisting of pairs of $20 \times 20$ images with a light square of random size on a dark background (also 10 in total). We report the results of this experiment, and the comparison of Sinkhorn with our algorithm in various setting, in Figure~\ref{fig:ot-real-graphs} (for MNIST) and Figure~\ref{fig:ot-app-graphs} (in Appendix for SQUARES), we also include for ease of comparison, Mirror Sinkhorn with the initialization of the Sinkhorn algorithm (in gray). We note that the well-understood slower convergence of our algorithm over the marginals is of a much smaller order than the gain in function objective.

\begin{figure*}[!ht]
    \centering
    \includegraphics[width=\textwidth]{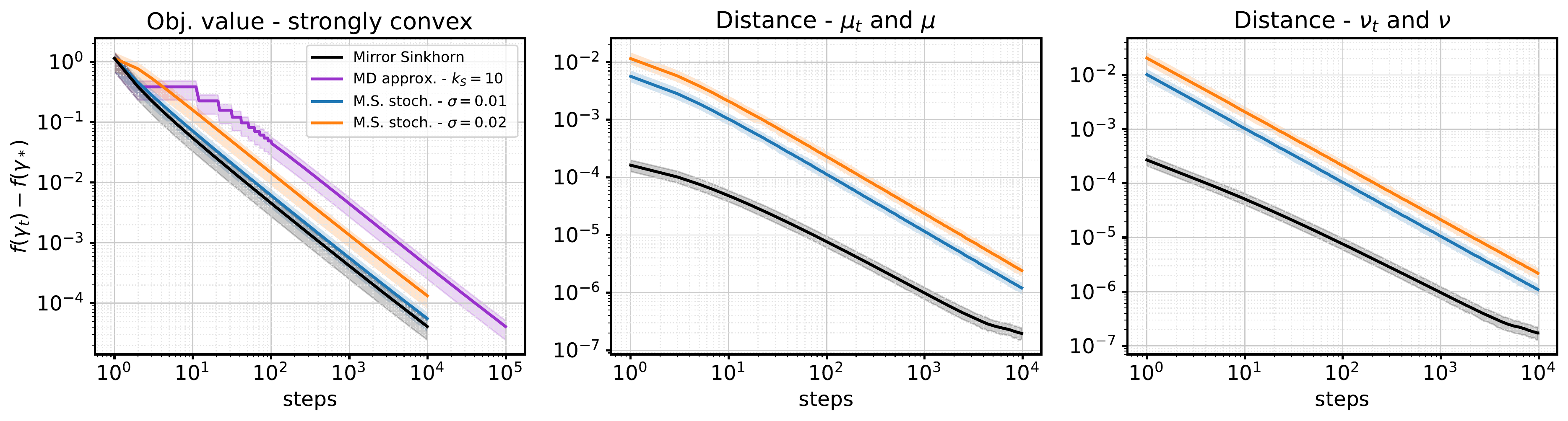}
    \vspace{-0.5cm}
    \caption{Performance of the Mirror Sinkhorn algorithm on strongly convex objectives, with exact gradient oracles (black) and stochastic gradients (blue and orange). The experiment is described in Section~\ref{sec:strong-expe}. This is compared with an approximation of mirror descent (purple) with $k_S = 10$ steps of Sinkhorn projection at each gradient update. \textbf{Left:} The value of $f(\gamma_t) - f(\gamma^*)$ as a function of $t$. \textbf{Center / Right:} The $\ell_1$ distance between the marginals $(\mu_t, \nu_t)$ of $\gamma_t$ and constraints $(\mu, \nu)$ before each normalization.}
    \label{fig:sc-graphs}
\end{figure*}

\subsection{Strongly convex optimization}
\label{sec:strong-expe}

We consider minimization of a smooth and strongly convex objective $f$ over $\cT(\mu, \nu)$ for randomly chosen $\mu, \nu$ of respective sizes $m, n$. We present an experiment where $f$ is a sum of several strongly convex objectives minimized at a common $\gamma^* \in \cT(\mu, \nu)$ randomly chosen in its interior (with a different $\gamma^*$ for each run). This allows to plot $f(\gamma_t) - f(\gamma^*)$ since the latter term is known. We take $m=50$, $n=60$ on this illustrative example, and run our algorithm on $T=10^4$ gradient update steps ($T$ is very large only for illustration purposes), with a stepsize regime proportional to $1/(t+1)$. Our algorithm is evaluated both with exact and stochastic gradient updates. In the latter case, the stochastic gradients are derived from the exact gradient by adding independent noise, allowing us to measure the impact of gradient noise.

The results are represented in Figure~\ref{fig:sc-graphs} for 1,024 independent runs, with median and 10th-90th percentile. They empirically confirm the speed of convergence of Theorem~\ref{thm:strong}. Our method is compared to an approximation of mirror descent, using a nested loop for the proximal step. We use a modified Mirror Sinkhorn algorithm with $k_{S}$ steps of alternating row/column normalization at each gradient update. 

Here we take $k_S = 10$, and observe that surprisingly, the convergence is of similar order as that of Mirror Sinkhorn (which can be interpreted as having $k_S = 1$). Compared to the algorithm that we propose, using a nested-loop algorithm doing several steps of normalization to mimick mirror descent yields a significant slowdown, with a multiplicative factor on the number of algorithmic steps. Comparing the results of our algorithm with this approach by comparing instead at each gradient update, using multiple normalization steps did not yield significant improvement over our approach, even with higher value for $k_S$. This strenghtens experimentally the choice of using only one normalization step at each gradient update in Mirror Sinkhorn.

\subsection{Point cloud registration - single cell data}
\label{sec:expe-wp}

We consider the problem of point cloud registration, i.e. mapping unmatched data sources, each composed of $n$ data points $X$ and $Y$, potentially in different dimensions $d_X$ and $d_Y$, based on common geometry, related to Gromov-Wasserstein and similar problems \cite{memoli2011gromov, djuric2012convex, peyre2016gromov, solomon2016entropic}. In the case where $d_X = d_Y$, this problem can be formulated as {\em Wasserstein-Procrustes} \cite{grave2019unsupervised}, where a convex relaxation is used for the problem of assigning the $n$ points of $X$ to those of $Y$, by finding a permutation of rows and columns that matches two Gram matrices $K_X$ and $K_Y$ of $X$ and $Y$. This relaxation can be generalized to any setting where two such matrices (similarity, or distances) can be formulated. It is written as
\[
\min_{\gamma \in \mathcal{T}(\mu, \nu)} \|K_X \gamma  - \gamma K_Y\|_2^2\, ,
\]
for $\mu, \nu$ uniform on $[n]$. This quadratic problem can be regularized by adding a $-\lambda \|\gamma\|_2^2$ penalty. For $\lambda > 0$ small enough, the objective function remains convex on $\mathcal{T}_{\mu, \nu}$. It is in general possible to consider this problem for any two matrices $K_X$ and $K_Y$ that represent pairwise correlations or distances between two sets of points, even if they are in different spaces, as in the Gromov-Wasserstein problem. We apply our algorithm to this functional, on single-cell measurement data. In this experiment, $X$ represents genetic expression and $Y$ chromatin accessibility. The SNARE-seq data \cite{chen2019high} consists of $1,047$ vectors in dimension $10$ and $19$ respectively. The ground-truth matching is known, allowing for interpretability of the result. Following the process in \cite{demetci2020gromov}, a $k$-NN connectivity graph is constructed from the correlations between data points. The matrix of pairwise shortest path distances on this graph between each pair of nodes is then used for $K_X$ and $K_Y$. Normalization and maximal capping is applied.

We minimize this functional by taking $\lambda = 3$, with a \mbox{$k$-NN} graph taken for $k = 5$. We recall that in this case, $n=1,047$. We are applying a step-size regime proportional to $1/(t+1)$, for $T=10^5$ steps. We analyze the relevance of this optimization problem to the motivating task of identifying the matching: we predict a binary matrix by thresholding $\gamma_T$ at $c/n$ for a small constant $c$, and compare it to the ground truth permutation matrix. This allows us to count the number of predicted, and of correct matches by comparing to the known correspondence. The results for the objective value and the corresponding prediction accuracy are reported in Figure~\ref{fig:wp-graphs}, and the distance between the marginals presented in Figure~\ref{fig:wp-graphs-app} in Appendix~\ref{app:exp}.

\begin{figure}[ht!]

\begin{center}
\centerline{\includegraphics[width=0.9\columnwidth]{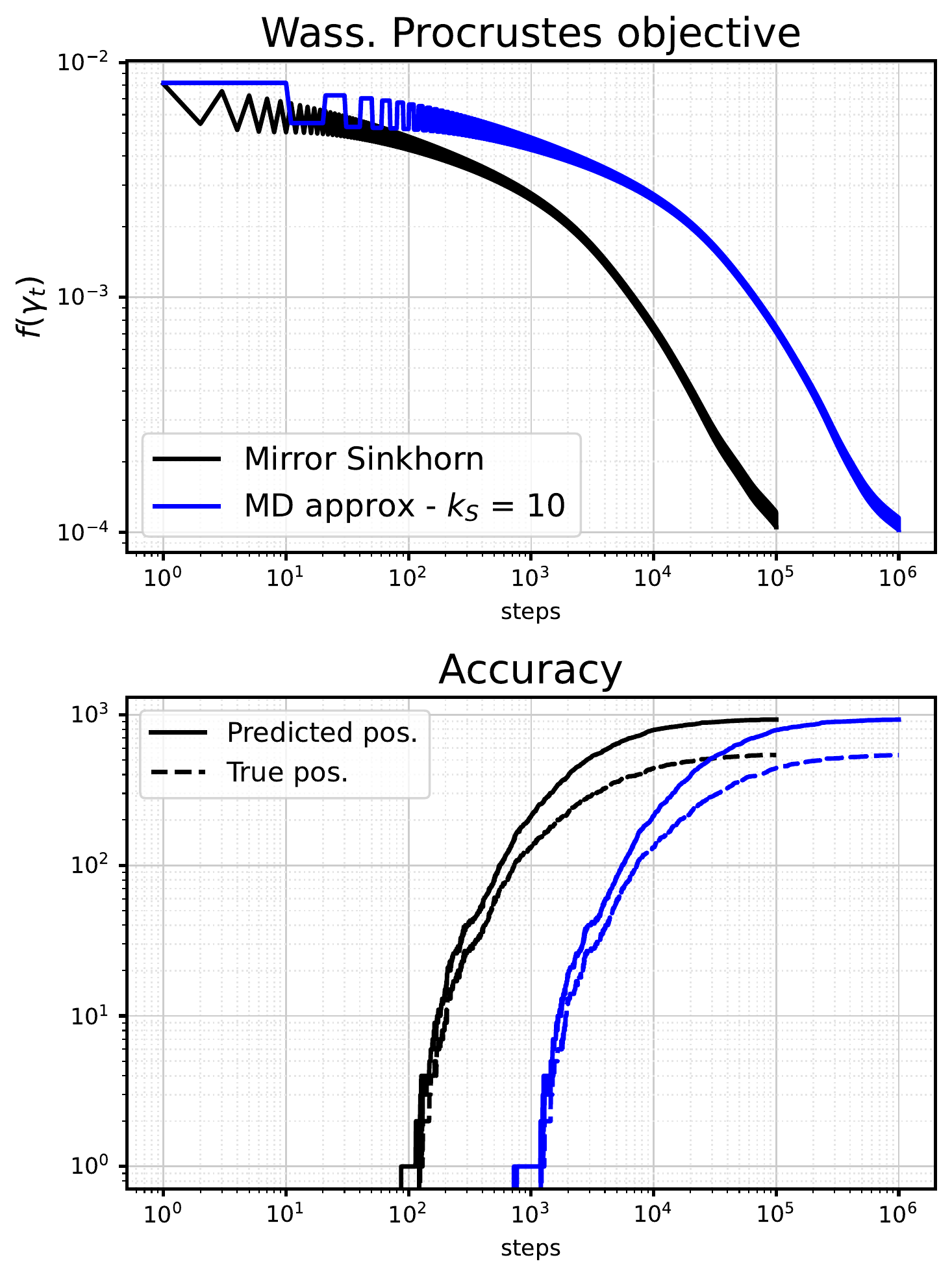}}
    \caption{Performance of the Mirror Sinkhorn algorithm (black) on the Wasserstein Procrustes objective, for the experiment described in Section~\ref{sec:expe-wp}. This is compared with an approximation of mirror descent (blue) with $k_S = 10$ steps of Sinkhorn projection at each gradient update. \textbf{Top:} The value of $f(\gamma_t)$ as a function of $t$. \textbf{Bottom}: For the predicted assignment matrix, thresholded version of $\gamma_t$, the number (compared to $n=1,047$) of predicted positives (solid line) and of true positives (dashed line) for both algorithms.}
    \label{fig:wp-graphs}
\end{center}
\vskip -0.2in
\end{figure}

\section{Concluding remarks}

We introduce an algorithm for convex optimization problems on transport polytopes, providing theoretical guarantees and empirical evidence for its performance on a wide range of problems, on simulated and real data.

Several questions are left open and could be interesting directions for future research. In the case of the optimal transport problem, there are natural connections between our approach and  annealing strategy for Sinkhorn \citep[see, e.g.][Remark 4.9]{peyre2019computational}. This is related to the question of taking several steps of Sinkhorn to approximate mirror descent in our comparisons, rather than $k_S = 1$, for nonlinear convex objectives. We did not find in our experimental results any significant improvement when taking $k_S>1$, but instead a significant computational overhead (due to the nested loops).

Other algorithmic approaches for OT and its entropic-regularized version have focused on optimization in the dual, as well as multimarginal, unbalanced, or partial versions. They have a different perspective (and no online aspect) and are focused on the linear objective \cite{dvurechensky2018computational, dvurechenskii2018decentralize, lin2019efficient, le2022multimarginal}. Our results, in particular Theorem~\ref{thm:ot_round_fixedstep}, can be seen in this larger context. As noted above, our analysis of the tensor setting is important, as it allows to treat the problem of multi-marginal optimal transport (MOT) - for linear objective functions $f$, and more generally for us, of any convex objective on this set. One of the main applications of the MOT setting is Wasserstein barycenters, and our work allows us to cover both these problems and generalizations \citep{pass2015multi, lin2022complexity, bigot2019data}.

Our results apply mostly on smooth (i.e. with Lipschitz gradients) convex functions, with additional results for strongly convex functions. Since results exist in the online learning literature on learning with Lipschitz losses \citep{hazan2007logarithmic}, it would also be a possible direction for research. Another interesting direction would be in nonconvex optimization, and the possiblity of approximating stationary points, on which there is an existing literature for mirror descent \cite{zhang2018convergence}. Finally, since our approach focuses on finite-dimensional iterates, semi-discrete approaches are not directly extendable to our setting. An extension in this direction, as considered by \citep{genevay2016stochastic}, would also be possible.

\section*{Acknowledgements} MB was supported by the Cantab Capital Institute for Mathematics of Information during this work. The authors would like to thank Jonathan Niles-Weed for discussions on this problem, and Laetitia~Meng-Papaxanthos for pointing them towards the SNARE-seq data. 

\newpage

\bibliography{references}
\bibliographystyle{icml2023}

\newpage
\appendix
\onecolumn
\section*{Appendix}
\appendix






\section{Main proofs}


\begin{proof}[Proof of Proposition~\ref{prop:reg}]
We first note
\begin{equation}
    (\nabla f^{\text{reg}}(\gamma))_{ij} = (\nabla f^{\text{reg}}(\gamma))_{ij} + (\nabla R_\mu(\gamma 1_n))_{i}+(\nabla R_\nu(\gamma^\top 1_m))_{j}.
\end{equation}
Let $t\geq 1$ be even, suppose that the iterate $\gamma_t^{\text{reg}}$ obtained when replacing $f$ by $f^{\text{reg}}$ in the Algorithm \ref{alg:main} is identical to $\gamma_t$. Then
\begin{align}
    (\gamma_{t+1}')^{\text{reg}}_{ij}&= (\gamma_{t+1}')_{ij}\exp\left(-\eta_t(\nabla R_\mu(\gamma_t1_n))_i -\eta_t (\nabla R_\nu(\gamma_t^\top 1_m))_j\right),\\
    &= (\gamma_{t+1}')_{ij}\exp\left(-\eta_t(\nabla R_\mu(\gamma_t1_n))_i\right),\\
\end{align}
since $\gamma_t^\top 1_m = \nu.$ The normalization of rows discards the gradient of the regularizer:
\begin{equation}
    (\gamma_{t+1}'1_m)^{\text{reg}}_{i} = (\gamma_{t+1}'1_m)_{i}\exp\left(-\eta_t(\nabla R_\mu(\gamma_t1_n))_i\right)
\end{equation}
so
\begin{align}
    (\gamma_{t+1})^{\text{reg}}_{ij} &= \frac{\mu_i}{(\gamma_{t+1}'1_m)^{\text{reg}}_{i}}(\gamma_{t+1}')^{\text{reg}}_{ij},\\
    &= \frac{\mu_i}{(\gamma_{t+1}'1_m)_{i}\exp\left(-\eta_t(\nabla R_\mu(\gamma_t1_n))_i\right)}(\gamma_{t+1}')_{ij}\exp\left(-\eta_t(\nabla R_\mu(\gamma_t1_n))_i\right),\\
    &= \frac{\mu_i}{(\gamma_{t+1}'1_m)_{i}}(\gamma_{t+1}')_{ij},\\
    &= (\gamma_{t+1})_{ij}.
\end{align}
The reasoning is the same for $t$ odd, and the initialization is true $\gamma_1 = \mu\nu^\top = \gamma_{1}^{\text{reg}}$. So $\gamma_t = \gamma_{t}^{\text{reg}}$ for all $t\geq 1$.
\end{proof}

\begin{thm}\label{thm:main_appendix}
If $f$ is $B$-Lipschitz for the norm $\Vert\cdot\Vert_1$, let $\radius = \Vert \log \mu\Vert_\infty + \Vert \log \nu\Vert_\infty$ and let the stepsize be $\eta_t = \frac{1}{B}\sqrt{\frac{\radius}{t}}$ in the algorithm \ref{alg:main}, then the output $\overline{\gamma}_T$ after $T$ steps verifies
\begin{equation}
    f(\overline{\gamma}_T) - f(\gamma^*) \leq  \frac{9B}{8}\sqrt{\frac{\radius}{T}}(2+\log(T)),
\end{equation}
and the constraints are close to be verified
\begin{equation}
    c(\overline{\gamma}_T) \leq  \frac{3}{2}\sqrt{\frac{\radius}{T}}(2+\log(T)),
\end{equation}
with
\begin{equation}
    c(\overline{\gamma}_T) = \Vert\overline{\gamma}_T1_n-\mu\Vert_1 + \Vert(\overline{\gamma}_T)^\top1_m-\nu\Vert_1.
\end{equation}
With constant step-size $\eta_t = \frac{1}{B}\sqrt{\frac{\radius}{T}}$ we have
\begin{equation}
    f(\overline{\gamma}_T) - f(\gamma^*) \leq  \frac{17B}{8}\sqrt{\frac{\radius}{T}}
\end{equation}
and
\begin{equation}
    c(\overline{\gamma}_T) \leq  2\sqrt{\frac{\radius}{T}}.
\end{equation}
\end{thm}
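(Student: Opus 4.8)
The plan is to read \ref{alg:main} as entropic mirror descent on $f$ with the negative-entropy mirror map, in which the usual Bregman projection onto the transport polytope is replaced by a single alternating $\KL$-projection onto one marginal constraint set per iteration. Write $U_\mu=\{\gamma:\gamma 1_n=\mu\}$ and $U_\nu=\{\gamma:\gamma^\top 1_m=\nu\}$. The multiplicative step $\gamma_{t+1}'=\gamma_t\odot\exp(-\eta_t\nabla f(\gamma_t))$ followed by row (resp.\ column) renormalization is exactly $\argmin$ of $\gamma\mapsto\eta_t\langle\nabla f(\gamma_t),\gamma\rangle+\KL(\gamma\,\Vert\,\gamma_t)$ over $U_\mu$ (resp.\ $U_\nu$), because the total mass is fixed to $1$ on each such affine set. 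The structural fact that drives everything is that $\gamma^*\in U_\mu\cap U_\nu$, so $\gamma^*$ is an admissible comparator at \emph{every} iteration, whichever marginal is being enforced.

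First I would establish the one-step inequality. With $g_t=\nabla f(\gamma_t)$, the three-point identity for the Bregman divergence together with optimality of $\gamma_{t+1}$ over the active constraint set, evaluated at $u=\gamma^*$, gives $\eta_t\langle g_t,\gamma_{t+1}-\gamma^*\rangle\le\KL(\gamma^*\Vert\gamma_t)-\KL(\gamma^*\Vert\gamma_{t+1})-\KL(\gamma_{t+1}\Vert\gamma_t)$. Bounding $\eta_t\langle g_t,\gamma_t-\gamma_{t+1}\rangle$ by H\"older ($\Vert g_t\Vert_\infty\le B$) and absorbing $\tfrac12\Vert\gamma_{t+1}-\gamma_t\Vert_1^2\le\KL(\gamma_{t+1}\Vert\gamma_t)$ by Pinsker ($1$-strong convexity of entropy in $\Vert\cdot\Vert_1$), then using convexity of $f$, yields
\[
\eta_t\bigl(f(\gamma_t)-f(\gamma^*)\bigr)\le\tfrac12\eta_t^2B^2+\KL(\gamma^*\Vert\gamma_t)-\KL(\gamma^*\Vert\gamma_{t+1}).
\]
Summing telescopes the divergences, and $\KL(\gamma^*\Vert\gamma_1)=\KL(\gamma^*\Vert\mu\nu^\top)=-H(\gamma^*)+H(\mu)+H(\nu)\le H(\mu)+H(\nu)\le\radius$. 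Jensen applied to the (stepsize-weighted) average $\overline\gamma_T$ gives $f(\overline\gamma_T)-f(\gamma^*)\le(\radius+\tfrac12B^2\sum_t\eta_t^2)/\sum_t\eta_t$; substituting $\eta_t=\tfrac1B\sqrt{\radius/t}$ (with $\sum_t\eta_t^2\le\tfrac{\radius}{B^2}(1+\log T)$, $\sum_t\eta_t\ge\tfrac1B\sqrt{\radius T}$) and the constant choice produces the two objective bounds up to the stated constants.

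For the constraints I would exploit that the renormalization is an exact $\KL$-projection. At a row step $\gamma_{t+1}1_n=\mu$ exactly while the previous iterate satisfies the column marginal, so $(\gamma_{t+1}-\gamma_t)1_n=\mu-\gamma_t1_n$ and Pinsker gives $\KL(\gamma_{t+1}\Vert\gamma_t)\ge\tfrac12\Vert\gamma_{t+1}-\gamma_t\Vert_1^2\ge\tfrac12 c(\gamma_t)^2$ (and symmetrically at column steps). Hence $\sum_t c(\gamma_t)^2\le2\sum_t\KL(\gamma_{t+1}\Vert\gamma_t)$, and by Cauchy--Schwarz and convexity of $c$ it suffices to prove $\sum_t\KL(\gamma_{t+1}\Vert\gamma_t)=O(\radius)$ up to logarithmic factors, since then $c(\overline\gamma_T)\le\tfrac1T\sum_t c(\gamma_t)\le\tfrac1{\sqrt T}\sqrt{2\sum_t\KL(\gamma_{t+1}\Vert\gamma_t)}=O(\sqrt{\radius/T})$.

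The main obstacle is exactly this bound $\sum_t\KL(\gamma_{t+1}\Vert\gamma_t)=O(\radius)$. The telescoped three-point identity gives $\sum_t\KL(\gamma_{t+1}\Vert\gamma_t)\le\radius+\sum_t\eta_t\langle g_t,\gamma^*-\gamma_{t+1}\rangle$, and the naive bound $|\langle g_t,\gamma^*-\gamma_{t+1}\rangle|\le2B$ only yields $O(\sqrt{\radius T})$, which would degrade the rate to $T^{-1/4}$; since the alternating projection is not an $\ell_1$-contraction in a single step, no per-iterate estimate repairs this. The fix I would use is a self-bounding argument: split $\langle g_t,\gamma^*-\gamma_{t+1}\rangle=\langle g_t,\gamma^*-\gamma_t\rangle+\langle g_t,\gamma_t-\gamma_{t+1}\rangle$, bound the first term by $-(f(\gamma_t)-f(\gamma^*))$ (convexity) and the second by Pinsker, and control the negative part of $f(\gamma_t)-f(\gamma^*)$ via a feasibility/rounding estimate $f(\gamma_t)-f(\gamma^*)\ge-\kappa B\,c(\gamma_t)$ (an admissible coupling lies within $\ell_1$-distance $O(c(\gamma_t))$ of $\gamma_t$, and $f$ is $B$-Lipschitz). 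Both remaining sums are then at most $\sqrt{\sum_t\eta_t^2}\,\sqrt{2\sum_t\KL(\gamma_{t+1}\Vert\gamma_t)}$, turning the estimate into $S\le\radius+c_0\sqrt S$ with $S=\sum_t\KL(\gamma_{t+1}\Vert\gamma_t)$ and $c_0=O(B\sqrt{\sum_t\eta_t^2})=O(\sqrt{\radius})$; solving this quadratic gives $S=O(\radius)$ and closes the argument, with the $\log T$ dependence entering only through $\sum_t\eta_t^2$ in the decreasing-stepsize case.
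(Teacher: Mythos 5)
Your reading of the algorithm (entropic mirror step followed by an exact $\KL$-projection onto one marginal set, with $\gamma^*$ feasible for both sets at every iteration) is correct, and your one-step bound $\eta_t\langle \nabla f(\gamma_t),\gamma_{t+1}-\gamma^*\rangle \le D_{\KL}(\gamma^*,\gamma_t)-D_{\KL}(\gamma^*,\gamma_{t+1})-D_{\KL}(\gamma_{t+1},\gamma_t)$ is exactly the paper's Lemma~\ref{lem:main} (proved there as an equality using $\gamma_{t+1}1_n=\gamma^*1_n=\mu$). Your two auxiliary facts, $c(\gamma_t)\le\Vert\gamma_{t+1}-\gamma_t\Vert_1$ and the rounding estimate $f(\gamma_t)-f(\gamma^*)\ge-2Bc(\gamma_t)$, are Lemmas~\ref{lem:constraints} and~\ref{lem:round}. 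Where you genuinely diverge is the constraint bound: you control $S=\sum_t D_{\KL}(\gamma_{t+1},\gamma_t)$ globally via the self-bounding inequality $S\le\radius+c_0\sqrt{S}$ with $c_0=O(B\sqrt{\sum_t\eta_t^2})$ and then pass to $c(\overline\gamma_T)$ by Cauchy--Schwarz. The paper never bounds $S$: it forms the per-iteration quantity $f(\gamma_t)-f(\gamma^*)+B'c(\gamma_t)$, which is nonnegative precisely by your rounding estimate, absorbs the linear term $(B+B')\eta_t\Vert\gamma_{t+1}-\gamma_t\Vert_1$ into the Pinsker quadratic by completing the square, and telescopes once, extracting both the objective bound ($B'=2B$) and the constraint bound ($B'=5B$) from the same display with the stated constants. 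Your route is sound (the quadratic resolves to $S=O(\radius(1+\log T))$) and even yields $c(\overline\gamma_T)=O(\sqrt{\radius(1+\log T)/T})$, asymptotically sharper in $T$; but the constant it produces (roughly $6\sqrt{\radius(1+\log T)/T}$, and about $7\sqrt{\radius/T}$ in the constant-stepsize case) does not imply the stated $\tfrac32\sqrt{\radius/T}(2+\log T)$ and $2\sqrt{\radius/T}$ for small and moderate $T$, so as a proof of the theorem verbatim the constraint constants would need reworking; the per-iteration absorption avoids this.

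There is one genuine gap in the objective bound for the decreasing stepsize: you bound the stepsize-weighted average of the iterates, but the algorithm outputs the uniform average $\overline\gamma_T=\tfrac1T\sum_{t=1}^T\gamma_t$ and the theorem is stated for that. Because the iterates are infeasible, $f(\gamma_t)-f(\gamma^*)$ can be negative, so you cannot pass from $\sum_t\eta_t\bigl(f(\gamma_t)-f(\gamma^*)\bigr)$ to $\eta_T\sum_t\bigl(f(\gamma_t)-f(\gamma^*)\bigr)$, nor from the weighted to the uniform average, for free. The paper's fix is exactly the nonnegativity trick above: add $B'c(\gamma_t)$ inside the bracket so that it is nonnegative, use $\eta_T\le\eta_t$, sum, divide by $T\eta_T$, and only then apply Jensen to $\overline\gamma_T$. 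You have every ingredient needed for this repair (the rounding lower bound and your control of $\sum_t c(\gamma_t)$ through $S$), but as written the decreasing-stepsize objective bound concerns a different output than the one in the statement; the constant-stepsize case is unaffected since the two averages coincide.
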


\begin{proof}
With Lemma \ref{lem:main} and Pinsker's inequality:
\begin{equation}
    D_{\KL}(\gamma^*, \gamma_t) - D_{\KL}(\gamma^*, \gamma_{t+1}) \geq 2\Vert\gamma_{t+1}- \gamma_t\Vert^2_1 + \eta_t\langle \nabla f(\gamma_t), \gamma_{t+1}-\gamma^*\rangle,
\end{equation}
We use the fact that $f$ is Lipschitz and convex:
\begin{equation}
    f(\gamma_t) - f(\gamma^*)\leq \langle \nabla f(\gamma_t), \gamma_{t+1}-\gamma^*\rangle + B\Vert \gamma_{t+1}- \gamma_t\Vert_1
\end{equation}
then
\begin{align}
    D_{\KL}(\gamma^*, \gamma_t) - D_{\KL}(\gamma^*, \gamma_{t+1}) \geq  \eta_t(f(\gamma_t) - f(\gamma^*)) + 2\Vert\gamma_{t+1}- \gamma_t\Vert^2_1 - \eta_t B\Vert \gamma_{t+1}- \gamma_t\Vert_1.
\end{align}
Given that $\gamma^* = \argmin_{\gamma\in\mathcal{T}(\mu, \nu)}f(\gamma)$, \begin{align}
    f(\gamma_t) - f(\gamma^*)&\geq f(\gamma_t) - f(\ROUND(\gamma_t))\\&\geq -B\Vert \gamma_t - \ROUND(\gamma_t)\Vert_1\\&\geq -2Bc(\gamma_t)\\
    &\geq -2B\Vert\gamma_{t+1}- \gamma_t\Vert_1
\end{align}
by Lemma \ref{lem:constraints} and Lemma \ref{lem:round}. So for $B'\geq 2B$
\begin{equation}
    0\leq \eta_t(f(\gamma_t) - f(\gamma^*) +B'c(\gamma_t))\leq  D_{\KL}(\gamma^*, \gamma_t) - D_{\KL}(\gamma^*, \gamma_{t+1}) +(B+B')\eta_t\Vert\gamma_{t+1}- \gamma_t\Vert_1 - 2 \Vert\gamma_{t+1}- \gamma_t\Vert_1^2.
\end{equation}

Moreover
\begin{equation}
    0\leq \eta_T(f(\gamma_t) - f(\gamma^*) +B'c(\gamma_t))\leq \eta_t(f(\gamma_t) - f(\gamma^*) +B'c(\gamma_t)),
\end{equation}
so
\begin{equation}\label{interm}
    0\leq \eta_T(f(\gamma_t) - f(\gamma^*) +B'c(\gamma_t))\leq  D_{\KL}(\gamma^*, \gamma_t) - D_{\KL}(\gamma^*, \gamma_{t+1}) +\frac{(B+B')^2}{8}\eta_t^2.
\end{equation}
Thus with $B' = 2B$,
\begin{equation}\label{beforesum}
    0\leq \eta_t(f(\gamma_t) - f(\gamma^*) +Bc(\gamma_t))\leq  D_{\KL}(\gamma^*, \gamma_t) - D_{\KL}(\gamma^*, \gamma_{t+1}) +\frac{9B^2\eta_t^2}{8}.
\end{equation}
We average over $1\leq t\leq T$,
\begin{equation}
    \frac{1}{T}\sum_{t=1}^T(f(\gamma_t) - f(\gamma^*) +Bc(\gamma_t)) \leq  \frac{1}{T\eta_T}D_{\KL}(\gamma^*, \gamma_1) + \frac{9B^2}{8}\frac{\sum_{k=1}^T\eta_t^2}{T\eta_T}.
\end{equation}
We remark $D_{\KL}(\gamma^*, \gamma_1)\leq\radius$. We use the convexity of $c$ and $f$ to extend the bound to the final iterate of the algorithm
\begin{equation}
    f(\overline{\gamma}_T) - f(\gamma^*) +2Bc(\overline{\gamma}_T) \leq  \frac{\radius}{T\eta_T} + \frac{9B^2}{8}\frac{\sum_{k=1}^T\eta_t^2}{T\eta_T}.
\end{equation}
We replace $\eta_t = \frac{1}{B}\sqrt{\frac{\radius}{t}}$
\begin{equation}\label{eq1}
    f(\overline{\gamma}_T) - f(\gamma^*) +2Bc(\overline{\gamma}_T) \leq  \frac{9B}{8}\sqrt{\frac{\radius}{T}}\left(\frac{8}{9}+1+ \log(T)\right).
\end{equation}
 We also infer from \eqref{interm}, with $B' = 5B$:
\begin{equation}
    3B\eta_T c(\gamma_t)\leq  D_{\KL}(\gamma^*, \gamma_t) - D_{\KL}(\gamma^*, \gamma_{t+1}) + \frac{9}{2}B^2\eta_t^2.
\end{equation}
We conclude by summing as well.
\end{proof}

\begin{thm}\label{thm:main_round_appendix}
If $f$ is $B$-Lipschitz for the norm $\Vert\cdot\Vert_1$, let $\radius = \Vert \log \mu\Vert_\infty + \Vert \log \nu\Vert_\infty$ and let the stepsize be $\eta_t = \frac{1}{B}\sqrt{\frac{\radius}{t}}$ in the algorithm \ref{alg:main}, then the output $\overline{\gamma}_T$ after $T$ steps verifies
\begin{equation}
    f(\ROUND(\overline{\gamma}_T)) - f(\gamma^*) \leq  \frac{9B}{8}\sqrt{\frac{\radius}{T}}(2+\log(T)).
\end{equation}
With constant step-size $\eta_t = \frac{1}{B}\sqrt{\frac{\radius}{T}}$ we have
\begin{equation}
    f(\ROUND(\overline{\gamma}_T)) - f(\gamma^*) \leq  \frac{17B}{8}\sqrt{\frac{\radius}{T}}.
\end{equation}
\end{thm}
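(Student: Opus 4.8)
The plan is to transfer the guarantee of Theorem~\ref{thm:main_appendix} to the rounded iterate by observing that the analysis there does not bound the suboptimality alone, but rather the combined quantity $f(\overline{\gamma}_T)-f(\gamma^*)+2Bc(\overline{\gamma}_T)$, in which the surplus term $2Bc(\overline{\gamma}_T)$ is exactly the budget one needs in order to pay for projecting onto the transport polytope. Thus essentially no new estimate is required: the rounding cost has already been accounted for in the previous proof, and one only has to repackage the bound.

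Concretely, I would first split
\begin{equation}
    f(\ROUND(\overline{\gamma}_T)) - f(\gamma^*) = \bigl(f(\ROUND(\overline{\gamma}_T)) - f(\overline{\gamma}_T)\bigr) + \bigl(f(\overline{\gamma}_T) - f(\gamma^*)\bigr),
\end{equation}
and then control the first bracket using that $f$ is $B$-Lipschitz together with Lemma~\ref{lem:round}, which bounds the rounding displacement by twice the constraint violation:
\begin{equation}
    f(\ROUND(\overline{\gamma}_T)) - f(\overline{\gamma}_T) \leq B\Vert \ROUND(\overline{\gamma}_T) - \overline{\gamma}_T\Vert_1 \leq 2Bc(\overline{\gamma}_T).
\end{equation}
Adding this to the second bracket reassembles precisely the combined quantity, so that
\begin{equation}
    f(\ROUND(\overline{\gamma}_T)) - f(\gamma^*) \leq f(\overline{\gamma}_T) - f(\gamma^*) + 2Bc(\overline{\gamma}_T),
\end{equation}
whose right-hand side is exactly the object already estimated inside the proof of Theorem~\ref{thm:main_appendix}.

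It then remains only to read off the numerical bounds. For the decreasing step size $\eta_t=\frac{1}{B}\sqrt{\radius/t}$, equation~\eqref{eq1} gives $\frac{9B}{8}\sqrt{\radius/T}\bigl(\frac{8}{9}+1+\log(T)\bigr)$, and since $\frac{8}{9}+1=\frac{17}{9}<2$ this is at most $\frac{9B}{8}\sqrt{\radius/T}(2+\log(T))$, which is the first claim. For the constant step size $\eta_t=\frac{1}{B}\sqrt{\radius/T}$, repeating the averaging step of that proof gives $\frac{\radius}{T\eta_T}+\frac{9B^2}{8}\frac{\sum_{t=1}^T\eta_t^2}{T\eta_T}$; using $\sum_{t=1}^T\eta_t^2=\radius/B^2$ and $T\eta_T=\frac{1}{B}\sqrt{\radius T}$, this evaluates to $\frac{17B}{8}\sqrt{\radius/T}$, which is the second claim. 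I do not anticipate a genuine obstacle: the single point to verify is that the coefficient of $c(\overline{\gamma}_T)$ produced by the previous theorem is at least $2B$, matching the factor $2$ coming from Lemma~\ref{lem:round}, so that the rounding error is fully absorbed with no loss in the leading constant. This is precisely guaranteed by the choice $B'=2B$ made there, and it is why the combined potential was carried through the earlier analysis in the first place.
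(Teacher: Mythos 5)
Your proposal is correct and follows essentially the same route as the paper: it bounds $f(\ROUND(\overline{\gamma}_T)) - f(\gamma^*)$ by the combined quantity $f(\overline{\gamma}_T) - f(\gamma^*) + 2Bc(\overline{\gamma}_T)$ via $B$-Lipschitzness and Lemma~\ref{lem:round}, and then reads off the bound already established in \eqref{eq1} (resp.\ re-runs the averaging with constant stepsize). If anything, your bookkeeping is slightly cleaner, since you use $\tfrac{8}{9}+1<2$ directly and spell out the constant-stepsize computation giving $\tfrac{17B}{8}\sqrt{\radius/T}$, which the paper leaves implicit.
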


\begin{proof}
The proof follows from \eqref{eq1}
\begin{equation}
    f(\overline{\gamma}_T) - f(\gamma^*) +2Bc(\overline{\gamma}_T) \leq  B\sqrt{\frac{\radius}{T}}\left(3+\log(T)\right),
\end{equation}
and
\begin{equation}
    f(\overline{\gamma}_T) - f(\gamma^*) +2Bc(\overline{\gamma}_T) \geq f(\ROUND(\overline{\gamma}_T)) - f(\gamma^*)
\end{equation}
by Lemma \ref{lem:round} and the fact that $f$ is $B$-Lipschitz.
\end{proof}

\begin{lem}\label{lem:main}
The iterates of algorithm \ref{alg:main} verify
\begin{equation}
    D_{\KL}(\gamma^*, \gamma_t) - D_{\KL}(\gamma^*, \gamma_{t+1}) = D_{\KL}(\gamma_{t+1}, \gamma_t) + \eta_t\langle \nabla f(\gamma_t), \gamma_{t+1}-\gamma^*\rangle,
\end{equation}
\end{lem}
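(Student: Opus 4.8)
The plan is to recognize $D_{\KL}$ as the Bregman divergence generated by the negative entropy $H(\gamma) = \sum_{ij}\gamma_{ij}(\log\gamma_{ij}-1)$, whose gradient is $(\nabla H(\gamma))_{ij} = \log\gamma_{ij}$, and then to invoke the classical three-point (generalized Pythagorean) identity for Bregman divergences. For positive matrices $x,y,z$ this identity reads
\begin{equation}
    D_{\KL}(x,z) - D_{\KL}(x,y) - D_{\KL}(y,z) = \langle \nabla H(y) - \nabla H(z),\, x - y\rangle,
\end{equation}
and follows purely algebraically by expanding each term as $D_{\KL}(a,b) = H(a) - H(b) - \langle\nabla H(b), a-b\rangle$ and cancelling the $H$ values. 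Applying it with $x = \gamma^*$, $y = \gamma_{t+1}$, $z = \gamma_t$ gives
\begin{equation}
    D_{\KL}(\gamma^*,\gamma_t) - D_{\KL}(\gamma^*,\gamma_{t+1}) = D_{\KL}(\gamma_{t+1},\gamma_t) + \langle \nabla H(\gamma_{t+1}) - \nabla H(\gamma_t),\, \gamma^* - \gamma_{t+1}\rangle,
\end{equation}
so everything reduces to identifying the last inner product with $\eta_t\langle\nabla f(\gamma_t),\,\gamma_{t+1}-\gamma^*\rangle$.

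Next I would unfold the Mirror Sinkhorn update in its two stages, exactly as in the proof of Proposition~\ref{prop:reg}. The multiplicative step sets $\log\gamma_{t+1}' = \log\gamma_t - \eta_t\nabla f(\gamma_t)$ entrywise, that is $\nabla H(\gamma_{t+1}') = \nabla H(\gamma_t) - \eta_t\nabla f(\gamma_t)$. Splitting $\nabla H(\gamma_{t+1}) - \nabla H(\gamma_t) = [\nabla H(\gamma_{t+1}) - \nabla H(\gamma_{t+1}')] - \eta_t\nabla f(\gamma_t)$ and substituting, the claim reduces to proving that the normalization residual vanishes:
\begin{equation}
    \langle \nabla H(\gamma_{t+1}) - \nabla H(\gamma_{t+1}'),\, \gamma^* - \gamma_{t+1}\rangle = 0.
\end{equation}

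The crux, and the step I expect to need the most care, is this orthogonality. For even $t$ the row normalization gives $(\gamma_{t+1})_{ij} = \frac{\mu_i}{(\gamma_{t+1}'1_n)_i}(\gamma_{t+1}')_{ij}$, so the log-ratio $(\nabla H(\gamma_{t+1}) - \nabla H(\gamma_{t+1}'))_{ij} = \log\frac{(\gamma_{t+1})_{ij}}{(\gamma_{t+1}')_{ij}} = \log\mu_i - \log(\gamma_{t+1}'1_n)_i =: a_i$ depends only on the row index $i$. The inner product therefore collapses to $\sum_i a_i\big((\gamma^*1_n)_i - (\gamma_{t+1}1_n)_i\big)$. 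Both couplings carry the same row marginal: $\gamma^*\in\mathcal{T}(\mu,\nu)$ forces $\gamma^*1_n = \mu$, and the row normalization forces $\gamma_{t+1}1_n = \mu$ as well, so every bracket is zero. This is precisely the generalized Pythagorean property of the I-projection: the residual gradient is constant along the axis being normalized, and $\gamma^*$ lies in the affine constraint set onto which $\gamma_{t+1}'$ is KL-projected.

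Combining the three displays then yields $\langle\nabla H(\gamma_{t+1}) - \nabla H(\gamma_t),\,\gamma^*-\gamma_{t+1}\rangle = -\eta_t\langle\nabla f(\gamma_t),\,\gamma^*-\gamma_{t+1}\rangle = \eta_t\langle\nabla f(\gamma_t),\,\gamma_{t+1}-\gamma^*\rangle$, which is exactly the asserted identity. The odd-$t$ case is identical after transposing the roles of rows and columns: the column normalization produces a residual that is constant along columns, and the shared column marginal $(\gamma^*)^\top 1_m = \nu = (\gamma_{t+1})^\top 1_m$ annihilates the inner product in the same way.
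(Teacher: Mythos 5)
Your proof is correct and is essentially the paper's argument in different packaging: the paper derives the same three-point relation inline by evaluating $D_{\KL}(\gamma,\gamma_t)-D_{\KL}(\gamma,\gamma_{t+1})$ at $\gamma=\gamma^*$ and $\gamma=\gamma_{t+1}$, splits the log-ratio through $\gamma'_{t+1}$ exactly as you do, and cancels the normalization residual because it is constant along the normalized axis while $\gamma^*$ and $\gamma_{t+1}$ share that marginal. Your explicit appeal to the Bregman three-point identity and the I-projection Pythagorean property is just a cleaner statement of the same computation.
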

\begin{proof}
We assume $t$ is odd, and write for any $\gamma\in\Delta_{m\times n}$:
\begin{align}\label{eq:anygamma}
    D_{\KL}(\gamma, \gamma_t) - D_{\KL}(\gamma, \gamma_{t+1}) &= \left\langle \gamma, \log\left(\gamma_{t+1}\oslash\gamma_{t}\right)\right\rangle,\\
    &= \left\langle \gamma, \log\left(\gamma_{t+1}\oslash\gamma'_{t+1}\right)\right\rangle + \left\langle \gamma, \log\left(\gamma'_{t+1}\oslash\gamma_{t}\right)\right\rangle,\\
     &= \left\langle \gamma,\log\left(\gamma_{t+1}\oslash\gamma'_{t+1}\right)\right\rangle- \eta_t\langle \nabla f(\gamma_t), \gamma\rangle,
\end{align}
we remark that
\begin{align}
   \left\langle \gamma^*, \log\left(\gamma_{t+1}\oslash\gamma'_{t+1}\right)\right\rangle
   &= \sum_{i}\sum_j \gamma^*_{ij}\log\left(\frac{\mu_i}{\sum_k(\gamma'_{t+1})_{ik}}\right),\\
   &= \left\langle \gamma^*1_n, \log\left(\mu\oslash(\gamma'_{t+1}1_n)\right)\right\rangle,\\
   &= \left\langle \mu, \log\left(\mu\oslash(\gamma'_{t+1}1_n)\right)\right\rangle,\\
\end{align}
and similarly
\begin{align}
    \left\langle \mu, \log\left(\mu\oslash(\gamma'_{t+1}1_n)\right)\right\rangle
    &=\left\langle \gamma_{t+1}1_n, \log\left(\mu\oslash(\gamma'_{t+1}1_n)\right)\right\rangle,\\
    &= \sum_{i}\sum_j (\gamma_{t+1})_{ij}\log\left(\frac{\mu_i}{\sum_k(\gamma'_{t+1})_{ik}}\right),\\
    &= \left\langle \gamma_{t+1}, \log\left(\gamma_{t+1}\oslash\gamma'_{t+1}\right)\right\rangle.
\end{align}
Thus for $\gamma = \gamma^*$ in \eqref{eq:anygamma}:
\begin{equation}
    D_{\KL}(\gamma^*, \gamma_t) - D_{\KL}(\gamma^*, \gamma_{t+1})= \left\langle \gamma_{t+1}, \log\left(\gamma_{t+1}\oslash\gamma'_{t+1}\right)\right\rangle - \eta_t\langle \nabla f(\gamma_t), \gamma^*\rangle.
\end{equation}
Now we consider $\gamma = \gamma_{t+1}$ in \eqref{eq:anygamma}:
\begin{equation}
    D_{\KL}(\gamma_{t+1}, \gamma_t) = \left\langle \gamma_{t+1}, \log\left(\gamma_{t+1}\oslash\gamma'_{t+1}\right)\right\rangle - \eta_t\langle \nabla f(\gamma_t), \gamma_{t+1}\rangle
\end{equation}
which allows to conclude
\begin{equation}
    D_{\KL}(\gamma^*, \gamma_t) - D_{\KL}(\gamma^*, \gamma_{t+1}) = D_{\KL}(\gamma_{t+1}, \gamma_t)+ \eta_t\langle \nabla f(\gamma_t), \gamma_{t+1}-\gamma^*\rangle.
\end{equation}
The reasoning is identical for $t$ even.
\end{proof}
\newpage
\begin{algorithm}[hbt!]
\caption{Stochastic Mirror Sinkhorn}\label{alg:stoch}
\KwData{Initialise $\gamma_1 =\overline{\gamma}_1= \mu\nu^\top$, define stepsize $\eta_t$.}
\For{$1\leq t\leq T$}{
Sample $g_t$ such that $\Esp{}{g_t\vert \gamma_t} = \nabla f(\gamma_t)$,\\
Update $\gamma_{t+1}' = \gamma_{t} \odot \exp\left(- \eta_tg_t\right),$\\
\eIf{$t$ is even}{
Rows: $\gamma_{t+1} = \Diag\left(\mu\oslash(\gamma_{t+1}'1_n)\right)\gamma_{t+1}'$,
}{
Cols: $\gamma_{t+1} = \gamma_{t+1}'\Diag\left(\nu\oslash((\gamma_{t+1}')^\top1_m)\right)$,
}
Update $\overline{\gamma}_{t+1} = \frac{t}{t+1}\overline{\gamma}_t + \frac{1}{t+1}\gamma_{t+1}$
}
\KwOut{$\overline{\gamma}_T = \frac{1}{T}\sum_{t=1}^T\gamma_t$ }

\end{algorithm}

\begin{lem}\label{lem:constraints}
The iterates of algorithm \ref{alg:main} verify
\begin{equation}
    D_{\KL}(\gamma_{t+1}, \gamma_t) \geq 2\Vert\gamma_{t+1}- \gamma_t\Vert_1^2\geq 2\max\{c(\gamma_t), c(\gamma_{t+1})\}.
\end{equation}
\end{lem}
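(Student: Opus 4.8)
The statement splits into two independent facts, and I would prove them in turn. The first inequality, $D_{\KL}(\gamma_{t+1},\gamma_t)\geq 2\Vert\gamma_{t+1}-\gamma_t\Vert_1^2$, is simply Pinsker's inequality. Both $\gamma_t$ and $\gamma_{t+1}$ lie in the simplex $\Delta_{m\times n}$: the initialization $\gamma_1=\mu\nu^\top$ has total mass $1$, and each subsequent row- or column-normalization preserves total mass, so every iterate is a genuine probability vector on the product index set. Moreover the multiplicative $\exp(\cdot)$ updates and the positive rescalings keep all entries strictly positive, so the divergence is finite. Pinsker then applies verbatim, with the paper's normalization of $\Vert\cdot\Vert_1$ accounting for the constant. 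This step is essentially a citation and requires no work beyond checking that the iterates are admissible.

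The substance is the second inequality, bounding the constraint violations by the displacement $\Vert\gamma_{t+1}-\gamma_t\Vert_1$. Here I would exploit the \emph{alternating} structure of Algorithm~\ref{alg:main}: one marginal of each iterate is matched exactly by construction, and crucially the \emph{complementary} marginal is the one matched at the previous step. Concretely, suppose $t$ is odd, so step $t$ normalized columns, giving $\gamma_{t+1}^\top 1_m=\nu$, while step $t-1$ normalized rows, giving $\gamma_t 1_n=\mu$. Then the row part of $c(\gamma_t)$ and the column part of $c(\gamma_{t+1})$ both vanish, and I can rewrite each surviving term as a marginal of the difference,
\begin{align*}
c(\gamma_t) &= \Vert \gamma_t^\top 1_m-\nu\Vert_1 = \Vert (\gamma_t-\gamma_{t+1})^\top 1_m\Vert_1,\\
c(\gamma_{t+1}) &= \Vert \gamma_{t+1} 1_n-\mu\Vert_1 = \Vert (\gamma_{t+1}-\gamma_t) 1_n\Vert_1,
\end{align*}
using $\gamma_{t+1}^\top 1_m=\nu$ in the first line and $\gamma_t 1_n=\mu$ in the second, so that both quantities reduce to marginals of the \emph{same} difference matrix.

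The key estimate is then that marginalization is an $\Vert\cdot\Vert_1$-contraction: for any matrix $A$ one has $\Vert A^\top 1_m\Vert_1=\sum_j|\sum_i A_{ij}|\leq \sum_{i,j}|A_{ij}|=\Vert A\Vert_1$ by the triangle inequality on the inner sums, and likewise $\Vert A 1_n\Vert_1\leq\Vert A\Vert_1$. Applying this with $A=\gamma_{t+1}-\gamma_t$ gives $c(\gamma_t)\leq\Vert\gamma_{t+1}-\gamma_t\Vert_1$ and $c(\gamma_{t+1})\leq\Vert\gamma_{t+1}-\gamma_t\Vert_1$, hence the same bound for their maximum; this is exactly the form in which the lemma feeds into the proof of Theorem~\ref{thm:main_appendix}. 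The case $t$ even is identical after swapping the roles of rows and columns and of $\mu$ and $\nu$. I expect the only delicate point to be the \emph{parity bookkeeping} — correctly identifying which marginal is exact for $\gamma_t$ versus $\gamma_{t+1}$ so that both constraint terms collapse onto the displacement $\gamma_{t+1}-\gamma_t$; the analytic content is the one-line contraction estimate above.
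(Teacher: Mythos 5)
Your proof is correct and takes essentially the same route as the paper's: Pinsker's inequality for the first bound, and for the second the fact that marginalization contracts the $\Vert\cdot\Vert_1$ norm (the paper phrases this as Jensen) combined with the alternating exact marginals $\gamma_t 1_n=\mu$ and $\gamma_{t+1}^\top 1_m=\nu$ for $t$ odd, with the symmetric case for $t$ even. Like the paper's argument, what you actually establish is the unsquared bound $\Vert\gamma_{t+1}-\gamma_t\Vert_1\geq\max\{c(\gamma_t),c(\gamma_{t+1})\}$, which is precisely the form invoked in the proof of Theorem~\ref{thm:main_appendix}.
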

\begin{proof}
The first inequality comes from Pinsker, the second from Jensen on the function $\Vert\cdot\Vert_1$:
\begin{equation}
    \Vert\gamma_{t+1}- \gamma_t\Vert_1 \geq \max\{\Vert\gamma_{t+1}1_n- \gamma_t 1_n\Vert_1,\Vert(\gamma_{t+1})^\top1_m- (\gamma_t)^\top 1_m \Vert_1\}.
\end{equation}
We assume $t$ is odd, then
\begin{equation}
    \Vert\gamma_{t+1}1_n- \gamma_t 1_n\Vert_1 = \Vert\gamma_{t+1}1_n- \mu\Vert_1 = c(\gamma_{t+1})
\end{equation}
and 
\begin{equation}
    \Vert(\gamma_{t+1})^\top1_m- (\gamma_t)^\top 1_m \Vert_1 = \Vert\nu - (\gamma_t)^\top 1_m \Vert_1 = c(\gamma_{t}).
\end{equation}
The proof is symmetrical for $t$ even.
\end{proof}


\begin{prop}\cite{altschuler2017near}\label{lem:round}
For all $\gamma\in\R_+^{m\times n}$
\begin{equation}
    \Vert \ROUND(\gamma) - \gamma\Vert_1\leq 2c(\gamma) = 2\left(\Vert \gamma 1_m - \mu\Vert_1 + \Vert \gamma^\top 1_n - \nu\Vert_1\right),
\end{equation}
moreover $ \ROUND(\gamma)\in\mathcal{T}(\mu, \nu)$ and the runtime of the algorithm is $O(mn).$
\end{prop}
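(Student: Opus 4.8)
The plan is to recall the explicit three-stage rounding map of \cite{altschuler2017near} and track the $\ell_1$ mass it displaces. Write $r(\gamma)=\gamma 1_n$ and $s(\gamma)=\gamma^\top 1_m$ for the row- and column-marginals, so that $c(\gamma)=\Vert r(\gamma)-\mu\Vert_1+\Vert s(\gamma)-\nu\Vert_1$. The map proceeds in three steps: (i) shrink each row $i$ by $\min(1,\mu_i/r_i(\gamma))$ to obtain $\gamma'$, so that $r(\gamma')=\min(r(\gamma),\mu)$ entrywise and no row sum exceeds $\mu$; (ii) shrink each column $j$ of $\gamma'$ by $\min(1,\nu_j/s_j(\gamma'))$ to obtain $\gamma''$, so that $s(\gamma'')=\min(s(\gamma'),\nu)$; (iii) restore the marginals exactly with a rank-one term $\ROUND(\gamma)=\gamma''+\rho_\mu\rho_\nu^\top/\Vert\rho_\mu\Vert_1$, where $\rho_\mu=\mu-r(\gamma'')\geq 0$ and $\rho_\nu=\nu-s(\gamma'')\geq 0$ are the marginal deficits.

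First I would verify $\ROUND(\gamma)\in\mathcal{T}(\mu,\nu)$. Nonnegativity is clear since every operation preserves it; the deficits are nonnegative because steps (i)--(ii) force $r(\gamma'')\leq\mu$ and $s(\gamma'')\leq\nu$ entrywise; and $\Vert\rho_\mu\Vert_1=\Vert\rho_\nu\Vert_1$ since both equal $1-\sum_{ij}\gamma''_{ij}$ (using $\sum_i\mu_i=\sum_j\nu_j=1$). Hence the rank-one term adds exactly $\rho_\mu$ to the row sums and $\rho_\nu$ to the column sums, giving $\ROUND(\gamma)1_n=\mu$ and $\ROUND(\gamma)^\top 1_m=\nu$. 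The $O(mn)$ runtime is immediate since each entry is touched a constant number of times.

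For the error bound I would split by the triangle inequality into the mass displaced at each stage. Since steps (i)--(ii) only shrink entries, $\Vert\gamma-\gamma'\Vert_1=\sum_i(r_i(\gamma)-\mu_i)_+=:P_1$ and $\Vert\gamma'-\gamma''\Vert_1=\sum_j(s_j(\gamma')-\nu_j)_+=:P_2$, while the correction displaces $\Vert\rho_\mu\rho_\nu^\top\Vert_1/\Vert\rho_\mu\Vert_1=\Vert\rho_\nu\Vert_1$. Immediately $P_1\leq\Vert r(\gamma)-\mu\Vert_1$, and since row-shrinking yields $s(\gamma')\leq s(\gamma)$ one gets $(s_j(\gamma')-\nu_j)_+\leq(s_j(\gamma)-\nu_j)_+$, hence $P_2\leq\Vert s(\gamma)-\nu\Vert_1$. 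It remains to control $\Vert\rho_\nu\Vert_1=\sum_j(\nu_j-s_j(\gamma'))_+$, using $\nu_j-s_j(\gamma'')=(\nu_j-s_j(\gamma'))_+$.

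The main obstacle is precisely this last term: row-shrinking perturbs the column marginals, so the column deficit measured at $\gamma'$ is not the original one. The clean fix is the elementary inequality $(a+b)_+\leq a_+ +b$ for $b\geq 0$, applied with $b=s_j(\gamma)-s_j(\gamma')\geq 0$, which gives $\sum_j(\nu_j-s_j(\gamma'))_+\leq\sum_j(\nu_j-s_j(\gamma))_+ +\sum_j(s_j(\gamma)-s_j(\gamma'))\leq\Vert s(\gamma)-\nu\Vert_1+P_1$, the last sum being the total mass $P_1$ removed in step (i). Combining, $\Vert\ROUND(\gamma)-\gamma\Vert_1\leq P_1+P_2+\Vert\rho_\nu\Vert_1\leq 2\Vert r(\gamma)-\mu\Vert_1+2\Vert s(\gamma)-\nu\Vert_1=2c(\gamma)$, which is the claim. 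Since the statement is quoted verbatim from \cite{altschuler2017near}, in the paper the cleanest route is simply to cite their rounding lemma rather than reproduce this accounting.
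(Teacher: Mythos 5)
Your proposal is correct. Note that the paper itself offers no proof of this proposition: it is stated verbatim with a citation to \cite{altschuler2017near}, so the ``paper's proof'' is exactly the route you mention in your last sentence. Your argument is a faithful and correct reconstruction of the analysis of the rounding procedure in that reference: the mass-accounting is sound, in particular the key step bounding the column deficit after row-shrinking via $(a+b)_+\leq a_+ + b$ with $b=s_j(\gamma)-s_j(\gamma')\geq 0$, together with the identity $\sum_j\bigl(s_j(\gamma)-s_j(\gamma')\bigr)=P_1$, correctly yields $\Vert\rho_\nu\Vert_1\leq\Vert s(\gamma)-\nu\Vert_1+\Vert r(\gamma)-\mu\Vert_1$ and hence the factor $2c(\gamma)$. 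The only cosmetic point worth adding is the degenerate case $\Vert\rho_\mu\Vert_1=0$, where the rank-one correction should be taken to be zero (the marginals are then already exact), so the division in step (iii) is never actually performed on a zero denominator; with that convention, feasibility, the $\ell_1$ bound, and the $O(mn)$ runtime all hold as you state.
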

\begin{prop}
Let $\ell\geq 0$. If $f$ is $\ell$-strongly convex with regards to the relative entropy, for any $\gamma\in\R^{m\times n}_+,$
\begin{equation}
    f(\gamma) - f(\gamma^*)\geq \ell D_{\KL}(\gamma^*, \gamma) -2\Vert f(\gamma^*)\Vert_\infty c(\gamma).
\end{equation}
This is also true for $\ell = 0$. If $f$ is $L$-smooth with regards to the relative entropy, then for any $\gamma\in\R^{m\times n}_+,$
\begin{equation}
    f(\gamma) - f(\gamma^*)\leq L D_{\KL}(\gamma^*, \gamma) +2\Vert f(\gamma^*)\Vert_\infty c(\gamma).
\end{equation}
\end{prop}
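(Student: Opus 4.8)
The plan is to reduce both inequalities to the control of a single first-order term. I would start from the defining inequality of each regularity assumption, evaluated at the optimum $\gamma^*$: strong convexity with respect to the relative entropy gives
\begin{equation}
    f(\gamma) - f(\gamma^*) \geq \langle \nabla f(\gamma^*), \gamma - \gamma^*\rangle + \ell D_{\KL}(\gamma^*, \gamma),
\end{equation}
while $L$-smoothness gives the same expression with the inequality reversed and $\ell$ replaced by $L$. The case $\ell = 0$ is nothing but convexity, so it is covered by the very same line. Thus in every case the statement follows once I show that the first-order term $\langle \nabla f(\gamma^*), \gamma - \gamma^*\rangle$ is controlled: from below by $-2\Vert \nabla f(\gamma^*)\Vert_\infty c(\gamma)$ for the (strongly) convex direction, and from above by $2\Vert \nabla f(\gamma^*)\Vert_\infty c(\gamma)$ for the smooth direction.

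The key step is to insert the rounding $\ROUND(\gamma)$, which by Proposition~\ref{lem:round} lies in $\mathcal{T}(\mu,\nu)$ and satisfies $\Vert \gamma - \ROUND(\gamma)\Vert_1 \leq 2c(\gamma)$. I would split
\begin{equation}
    \langle \nabla f(\gamma^*), \gamma - \gamma^*\rangle = \langle \nabla f(\gamma^*), \gamma - \ROUND(\gamma)\rangle + \langle \nabla f(\gamma^*), \ROUND(\gamma) - \gamma^*\rangle.
\end{equation}
The first summand is bounded in absolute value by H\"older's inequality, $\vert\langle \nabla f(\gamma^*), \gamma - \ROUND(\gamma)\rangle\vert \leq \Vert \nabla f(\gamma^*)\Vert_\infty \Vert \gamma - \ROUND(\gamma)\Vert_1 \leq 2\Vert \nabla f(\gamma^*)\Vert_\infty c(\gamma)$, which supplies exactly the advertised slack in either direction. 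The second summand involves only feasible points, since both $\ROUND(\gamma)$ and $\gamma^*$ lie in $\mathcal{T}(\mu,\nu)$, and is treated by the first-order optimality of $\gamma^*$ on the polytope, namely $\langle \nabla f(\gamma^*), \gamma' - \gamma^*\rangle \geq 0$ for every $\gamma' \in \mathcal{T}(\mu,\nu)$.

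Assembling these pieces proves the strongly convex (and convex) inequality: the feasible summand is nonnegative by optimality and the rounding summand is at least $-2\Vert \nabla f(\gamma^*)\Vert_\infty c(\gamma)$, so adding back the modulus term $\ell D_{\KL}(\gamma^*, \gamma)$ yields the claim. The smooth inequality is obtained by the symmetric manipulation, using the reversed (upper) Bregman bound together with the same rounding decomposition. I expect the main obstacle to be precisely the sign of the feasible summand $\langle \nabla f(\gamma^*), \ROUND(\gamma) - \gamma^*\rangle$: for the convex direction it is discarded cleanly by optimality, whereas for the smooth direction one cannot merely drop it, and must instead verify that the relative-entropy term $L D_{\KL}(\gamma^*, \gamma)$ dominates the suboptimality of $\ROUND(\gamma)$ relative to $\gamma^*$ — this is the delicate point where the $\ell_\infty$/$\ell_1$ duality and the properties of $D_{\KL}$ must be combined carefully.
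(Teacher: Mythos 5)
Your treatment of the first inequality is essentially the paper's own argument: write the Bregman lower bound at $\gamma^*$, insert $\ROUND(\gamma)$, bound $\vert\langle \nabla f(\gamma^*), \gamma - \ROUND(\gamma)\rangle\vert \leq \Vert \nabla f(\gamma^*)\Vert_\infty\Vert\gamma-\ROUND(\gamma)\Vert_1\leq 2\Vert\nabla f(\gamma^*)\Vert_\infty c(\gamma)$ via Proposition~\ref{lem:round}, and discard the feasible summand using optimality of $\gamma^*$ on $\mathcal{T}(\mu,\nu)$; this part is correct (and, like the paper's proof, it implicitly reads the constant in the statement as $\Vert\nabla f(\gamma^*)\Vert_\infty$).

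For the smooth upper bound, however, your proposal stops exactly where the work is. You correctly note that $\langle \nabla f(\gamma^*), \ROUND(\gamma)-\gamma^*\rangle \geq 0$ has the wrong sign to be dropped, but then only assert that the term ``must'' be dominated by $L\,D_{\KL}(\gamma^*,\gamma)$, without an argument --- and that route cannot work as stated: by convexity this inner product is only bounded above by $f(\ROUND(\gamma))-f(\gamma^*)$, which is not controlled by $D_{\KL}(\gamma^*,\gamma)$ or $c(\gamma)$ in general (consider a linear $f$, which is $0$-smooth with respect to the relative entropy, and a feasible $\gamma$ far from $\gamma^*$). The paper closes this step by a different, stronger ingredient: it invokes the optimality of $\gamma^*$ in the form of an \emph{equality},
\begin{equation}
    \langle \nabla f(\gamma^*), \ROUND(\gamma)\rangle = \langle \nabla f(\gamma^*), \gamma^*\rangle ,
\end{equation}
i.e.\ first-order optimality holding with equality over all of $\mathcal{T}(\mu,\nu)$ (as when $\nabla f(\gamma^*)$ lies in the span of the marginal constraints, e.g.\ for an optimum in the relative interior). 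With that equality the feasible summand vanishes identically, and the same H\"older-plus-rounding bound yields both the lower and the upper inequality. Your proof of the smooth case is therefore incomplete: you need to justify (or assume) this equality, not merely the inequality $\langle \nabla f(\gamma^*), \gamma'-\gamma^*\rangle\geq 0$ for feasible $\gamma'$.
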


\begin{proof}
By strong convexity
\begin{equation}
    f(\gamma) - f(\gamma^*)\geq \ell D_{\KL}(\gamma^*, \gamma) + \langle \nabla f(\gamma^*), \gamma - \gamma^*\rangle
\end{equation}
Since $\gamma^*$ is the minimum of $f$ on $\mathcal{T}(\mu, \nu)$, 
\begin{equation}
    \langle \nabla f(\gamma^*), \gamma^*\rangle = \langle \nabla f(\gamma^*), \ROUND(\gamma)\rangle,
\end{equation}
so
\begin{equation}
    f(\gamma) - f(\gamma^*)\geq \ell D_{\KL}(\gamma^*, \gamma) -\Vert f(\gamma^*)\Vert_\infty \Vert \gamma - \ROUND(\gamma)\Vert_1
\end{equation}
and we conclude with Lemma \ref{lem:round}.

By smoothness,
\begin{equation}
    f(\gamma) - f(\gamma^*)\leq L D_{\KL}(\gamma^*, \gamma) + \langle \nabla f(\gamma^*), \gamma - \gamma^*\rangle
\end{equation}
so
\begin{equation}
    f(\gamma) - f(\gamma^*)\leq L D_{\KL}(\gamma^*, \gamma) +\Vert f(\gamma^*)\Vert_\infty \Vert \gamma - \ROUND(\gamma)\Vert_1
\end{equation}
and we also conclude with Lemma \ref{lem:round}.
\end{proof}

\subsection{Online Optimisation}



\begin{proof}[Proof of Theorem~\ref{thm:online}]
We follow the proof of Theorem~\ref{thm:main_appendix} up to \eqref{beforesum}
\begin{equation}
    0\leq \eta_t(f_t(\gamma_t) - f_t +2Bc(\gamma_t))\leq  D_{\KL}(\gamma^*, \gamma_t) - D_{\KL}(\gamma^*, \gamma_{t+1}) +\frac{9B^2\eta_t^2}{8}.
\end{equation}
Then we sum over $1\leq t\leq T$. Idem for the constraints.
\end{proof}

\subsection{Stochastic Case}


\begin{proof}[Proof of Theorem~\ref{thm:stoch}]
We follow the proof of Theorem~\ref{thm:main_appendix} up to \eqref{beforesum}, where everything is true in expectations,
\begin{equation}
    0\leq \Esp{}{\eta_t(f(\gamma_t) - f +2Bc(\gamma_t))}\leq  \Esp{}{D_{\KL}(\gamma^*, \gamma_t) - D_{\KL}(\gamma^*, \gamma_{t+1})} +\frac{9B_{\sigma}^2\eta_t^2}{8}.
\end{equation}
Then we note
\begin{equation}
    f(\ROUND(\gamma_t)) - f(\gamma^*)\leq f(\gamma_t) - f +2Bc(\gamma_t),
\end{equation}
we sum over $1\leq t\leq T$ and conclude with Jensen's inequality.
\end{proof}

\subsection{Optimal Transport}

\begin{algorithm}[hbt!]
\caption{Mirror Sinkhorn for Optimal Transport}\label{alg:ot}
\KwData{Initialise $\gamma_1 =\overline{\gamma}_1= \mu\nu^\top$, define stepsize $\eta_t$, streaming stochastic cost matrices $C_t$.}
\For{$1\leq t\leq T$}{
Update $\gamma_{t+1}' = \gamma_{t} \odot \exp\left(- \eta_tC_t\right),$\\
\eIf{$t$ is even}{
Rows: $\gamma_{t+1} = \Diag\left(\mu\oslash(\gamma_{t+1}'1_n)\right)\gamma_{t+1}'$,
}{
Cols: $\gamma_{t+1} = \gamma_{t+1}'\Diag\left(\nu\oslash((\gamma_{t+1}')^\top1_m)\right)$,
}
Update $\overline{\gamma}_{t+1} = \frac{t}{t+1}\overline{\gamma}_t + \frac{1}{t+1}\gamma_{t+1}$
}
\KwOut{$\overline{\gamma}_T = \frac{1}{T}\sum_{t=1}^T\gamma_t$ }

\end{algorithm}

\begin{thm}\label{thm:ot_appendix}
Let $(C_t)_{t\geq 1}$ be a sequence of random cost matrices such that $\Esp{}{C_t} = C$ for a given cost matrix $C\in\R^{m\times n}$ that verifies $\Vert C\Vert_\infty\leq 1$, and $\Esp{}{\Vert C_t - C\Vert_\infty^2}\leq \sigma^2$.
Let the stepsize be $\eta_t = \sqrt{\frac{\radius}{(1+\sigma^2)t}}$ in the algorithm \ref{alg:ot}, then the output $\overline{\gamma}_T$ after $T$ steps verifies:
\begin{equation}
    \Esp{}{\langle C, \overline{\gamma}_T\rangle} - \langle C, \gamma^*\rangle \leq  2\sqrt{\frac{(1+\sigma^2)\radius}{T}}(1+\log(T)),
\end{equation}
and the constraints are close to be verified
\begin{equation}
    \Esp{}{c(\overline{\gamma}_T)} \leq  \sqrt{\frac{\radius}{T}}(2+\log(T)),
\end{equation}
with
\begin{equation}
    c(\overline{\gamma}_T) = \Vert\overline{\gamma}_T1_n-\mu\Vert_1 + \Vert(\overline{\gamma}_T)^\top1_m-\nu\Vert_1.
\end{equation}
\end{thm}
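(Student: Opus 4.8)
The plan is to run the argument of Theorem~\ref{thm:main_appendix} with the linear objective $f(\gamma)=\langle C,\gamma\rangle$, which is convex and $\Vert C\Vert_\infty$-Lipschitz for $\Vert\cdot\Vert_1$ with $\Vert C\Vert_\infty\leq 1$, so $B=1$; the only genuinely new feature is that Algorithm~\ref{alg:ot} multiplies by $\exp(-\eta_t C_t)$ rather than $\exp(-\eta_t\nabla f(\gamma_t))$, so every deterministic inequality must be re-derived with the stochastic gradient $C_t$ and then taken in expectation. First I would note that Lemma~\ref{lem:main} and Lemma~\ref{lem:constraints} depend only on the row/column normalization structure of the update, not on the multiplicative factor, so they hold verbatim with $C_t$ in place of $\nabla f(\gamma_t)$; in particular
\[
    D_{\KL}(\gamma^*, \gamma_t) - D_{\KL}(\gamma^*, \gamma_{t+1}) = D_{\KL}(\gamma_{t+1}, \gamma_t) + \eta_t\langle C_t, \gamma_{t+1} - \gamma^*\rangle,
\]
and Pinsker bounds $D_{\KL}(\gamma_{t+1},\gamma_t)$ below by $2\Vert\gamma_{t+1}-\gamma_t\Vert_1^2$.

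The second step is to split the inner product into signal, noise, and displacement,
\[
    \langle C_t, \gamma_{t+1} - \gamma^*\rangle = \langle C, \gamma_t - \gamma^*\rangle + \langle C_t - C, \gamma_t - \gamma^*\rangle + \langle C_t, \gamma_{t+1} - \gamma_t\rangle.
\]
Because $f$ is linear, the first term equals $f(\gamma_t)-f(\gamma^*)$ exactly; the rounding lower bound (Lemma~\ref{lem:round} with $\gamma^*=\argmin_{\mathcal{T}(\mu,\nu)}f$) gives $f(\gamma_t)-f(\gamma^*)\geq -2\Vert C\Vert_\infty c(\gamma_t)\geq -2c(\gamma_t)$, which is what lets me add a penalty $B'c(\gamma_t)$ while keeping the left-hand side nonnegative for $B'\geq 2$. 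The displacement term I bound by $\langle C_t,\gamma_{t+1}-\gamma_t\rangle\geq -\Vert C_t\Vert_\infty\Vert\gamma_{t+1}-\gamma_t\Vert_1$, combine it with $c(\gamma_t)\leq\Vert\gamma_{t+1}-\gamma_t\Vert_1$ from Lemma~\ref{lem:constraints}, and apply the Young inequality $ax-2x^2\leq a^2/8$ to absorb every $\Vert\gamma_{t+1}-\gamma_t\Vert_1$ term into $\frac{\eta_t^2}{8}(\Vert C_t\Vert_\infty+B')^2$, exactly as in the passage leading to \eqref{interm}.

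Taking expectations is where the stochastic input enters. Conditioning on the filtration generated by $C_1,\dots,C_{t-1}$ (with respect to which $\gamma_t$, and hence the penalty, is measurable), the noise term vanishes because $\mathbb{E}[C_t\mid\gamma_t]=C$, so $\mathbb{E}\!\left[\langle C_t-C,\gamma_t-\gamma^*\rangle\right]=0$. For the quadratic term I would use $\Vert C_t\Vert_\infty\leq\Vert C\Vert_\infty+\Vert C_t-C\Vert_\infty\leq 1+\Vert C_t-C\Vert_\infty$ together with $\mathbb{E}\!\left[\Vert C_t-C\Vert_\infty\right]\leq\sigma$ (Jensen) and $\mathbb{E}\!\left[\Vert C_t-C\Vert_\infty^2\right]\leq\sigma^2$ to bound $\mathbb{E}\!\left[(\Vert C_t\Vert_\infty+B')^2\right]$ by a constant multiple of $1+\sigma^2$. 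This produces, in expectation, the analogue of \eqref{beforesum},
\[
    0\leq \eta_T\,\mathbb{E}\!\left[f(\gamma_t)-f(\gamma^*)+B'c(\gamma_t)\right]\leq \mathbb{E}\!\left[D_{\KL}(\gamma^*,\gamma_t)-D_{\KL}(\gamma^*,\gamma_{t+1})\right]+\frac{\eta_t^2}{8}\,\mathbb{E}\!\left[(\Vert C_t\Vert_\infty+B')^2\right].
\]

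Finally I would sum over $1\leq t\leq T$, telescope the Kullback--Leibler terms, use $D_{\KL}(\gamma^*,\gamma_1)\leq\radius$, and divide by $T\eta_T$; linearity of $f$ and convexity of $c$ transfer the averaged bound to $\overline{\gamma}_T=\frac1T\sum_t\gamma_t$. The stepsize $\eta_t=\sqrt{\radius/((1+\sigma^2)t)}$ is tuned so that $\eta_t^2(1+\sigma^2)=\radius/t$, whence $\frac{1}{T\eta_T}=\sqrt{(1+\sigma^2)/(\radius T)}$ and the variance sum $\sum_{t=1}^T\eta_t^2\,\mathbb{E}[(\Vert C_t\Vert_\infty+B')^2]$ is of order $\radius(1+\log T)$, the $(1+\sigma^2)$ factors cancelling; combining the two contributions yields the $\sqrt{(1+\sigma^2)\radius/T}\,(1+\log T)$ rate. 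Taking $B'=2$ and dropping the nonnegative $\mathbb{E}[c(\overline{\gamma}_T)]$ term gives the objective bound, while rerunning the summation with a larger penalty isolates $(B'-2)\mathbb{E}[c(\overline{\gamma}_T)]$ on the left to give the constraint bound, just as the main proof reuses \eqref{interm} with $B'=5B$. The main obstacle is the constant bookkeeping in the second-moment estimate: collapsing $\mathbb{E}[(\Vert C_t\Vert_\infty+B')^2]$ cleanly against the tuned stepsize so the $(1+\sigma^2)$ factor lands inside the square root with the stated numerical constants, all while keeping the conditional-expectation (martingale-difference) argument for the noise term rigorous.
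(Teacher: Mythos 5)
Your proposal is correct and takes essentially the same route as the paper: the paper's own proof is a one-line deferral to Theorem~\ref{thm:main_appendix} (applied with the linear, $1$-Lipschitz objective $\gamma\mapsto\langle C,\gamma\rangle$ and the stochastic handling, in expectation, used for the stochastic case), and your martingale-difference decomposition, conditioning argument, and second-moment bound on $\Vert C_t\Vert_\infty$ are precisely the steps that deferral leaves implicit. The only outstanding point is the final constant bookkeeping (e.g.\ landing the $(1+\sigma^2)$ factor exactly as stated, and its absence from the constraint bound), which you flag yourself and which the paper does not derive explicitly either.
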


\begin{proof}
This follows directly from Theorem~\ref{thm:main_appendix}.
\end{proof}


\begin{proof}[Proof of Theorem~\ref{thm:ot_round}]
This follows from Theorem~\ref{thm:ot_appendix} with the same proof as Theorem~\ref{thm:main_round_appendix}.
\end{proof}
\begin{proof}[Proof of Theorem~\ref{thm:ot_round_fixedstep}]
This follows directly from the proofs of Theorem~\ref{thm:ot_appendix} and of Theorem~\ref{thm:main_round_appendix}.
\end{proof}

\subsection{Strong Convexity}

\begin{proof}[Proof of Theorem~\ref{thm:strong}]
By smoothness,
\begin{equation}
    f(\gamma_{t+1}) - f(\gamma_{t})\leq \langle \nabla f(\gamma_t), \gamma_{t+1} - \gamma_{t}\rangle + L D_{\KL}(\gamma_{t+1}, \gamma_t)
\end{equation}
by strong convexity,
\begin{equation}
    f(\gamma_{t}) - f(\gamma^*)
    \leq \langle\nabla f(\gamma_t), \gamma_{t} - \gamma^*\rangle - \ell D_{\KL}( \gamma^*,\gamma_{t})
\end{equation}
adding the two with Lemma \ref{lem:main},
\begin{equation}
    (1-\ell\eta_t)D_{\KL}(\gamma^*, \gamma_t) - D_{\KL}(\gamma^*, \gamma_{t+1}) \geq (1-L \eta_t)D_{\KL}(\gamma_{t+1}, \gamma_t) + \eta_t(f(\gamma_{t+1}) - f(\gamma^*)).
\end{equation}

We reason as in the proof of Theorem \ref{thm:main_appendix}, assuming $\eta_t<1/L$:
\begin{equation}
    \eta_T\left(f(\gamma_{t+1})-f(\gamma^*)+B'c(\gamma_{t+1})\right)\leq (1-\ell\eta_t)D_{\KL}(\gamma^*, \gamma_t) - D_{\KL}(\gamma^*, \gamma_{t+1}) + \frac{(B')^2\eta_t^2}{8(1-L\eta_t)}.
\end{equation}
Let $\eta_t = \frac{1}{\ell t}$, then
\begin{equation}
f(\gamma_{t+1})-f(\gamma^*)+B'c(\gamma_{t+1})\leq \ell(t-1)D_{\KL}(\gamma^*, \gamma_t) - \ell tD_{\KL}(\gamma^*, \gamma_{t+1}) + \frac{(B'+L)^2}{8\ell t}.
\end{equation}

A sum up to $T$ provides
\begin{equation}
    \sum_{t=1}^T(f(\gamma_{t+1}) - f(\gamma^*)+B'\Vert\gamma_{t+1}- \gamma_t\Vert_1) + \ell TD_{\KL}(\gamma^*, \gamma_{T+1})\leq \sum_{t=1}^T\frac{(B'+L)^2}{8\ell t}.
\end{equation}
Finally we use Jensen and Lemma \ref{lem:round} with $B'=2B$.
\end{proof}

\subsection{Tensor Problem}

\begin{algorithm}[hbt!]
\caption{Tensor Mirror Sinkhorn}\label{alg:tensor}
\KwData{Initialise $(\gamma_1)_{i_1\dots i_d} =(\overline{\gamma}_1)_{i_1\dots i_d}= \mu_{i_1}\cdots \mu_{i_d}$, define stepsize $\eta_t$.}
\For{$1\leq t\leq T$}{
Update $\gamma_{t+1}' = \gamma_{t} \odot \exp\left(- \eta_t\nabla f(\gamma_t)\right),$\\
\For{$1\leq k \leq d$}{
Sum across all dimensions but $k$: $$(S_k(\gamma_{t+1}'))_{i_k} = \sum_{j: j_k = i_k}(\gamma_{t+1}')_{j_1,\dots,j_d}.$$\\
Constraint distance $c_k = D_{\KL{}}(\mu_k, S_k(\gamma'_{t+1}))$
}
Find furthest dimension $K = \argmax_k c_k$,\\
Normalize along dimension $K$: $$(\gamma_{t+1})_{i_1\cdots i_d} = \frac{(\mu_{K})_{i_{K}}}{\left(S_{K}(\gamma_{t+1}')\right)_{i_{K}}}(\gamma_{t+1}')_{i_1\cdots i_d}.$$}
\KwOut{$\overline{\gamma}_T = \frac{1}{T}\sum_{t=1}^T\gamma_t$ }

\end{algorithm}
 

\begin{proof}[Proof of Theorem~\ref{thm:tensor}]
The proof of Lemma \ref{lem:main} is still valid for the tensor case. Lemma \ref{lem:constraints} is modified with cyclical Bregman projections on the marginal spaces:
\begin{equation}
    \sum_{t=1}^TD_{\KL}(\mu_{k(t)}, S_{k(t)}(\gamma_t))\leq \radius + 2\sum_{t=1}^T\eta_t\Vert\nabla f(\gamma_t)\Vert_\infty
\end{equation}
with $k(t)$ the dimension chosen at iteration $t$. Then, since the choice of dimension to normalise is greedy,
\begin{equation}
    \sum_{t=1}^T\max_{1\leq k\leq d}D_{\KL}(\mu_{k}, S_{k}(\gamma_t))\leq \radius + 2\sum_{t=1}^T\eta_t\Vert\nabla f(\gamma_t)\Vert_\infty,
\end{equation}
and we conclude with Jensen's inequality.
\end{proof}

\section{Complementary experimental results}
\label{app:exp}
\begin{figure*}[!ht]
    \centering
    \includegraphics[width=\textwidth]{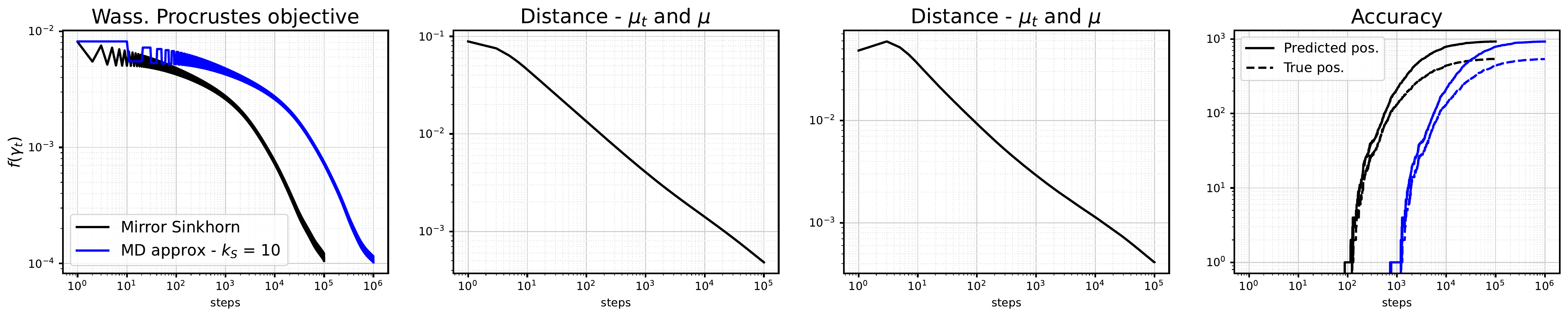}
    \caption{Performance of the Mirror Sinkhorn algorithm (black) on the Wasserstein Procrustes objective, for the experiment described in Section~\ref{sec:expe-wp}. This is compared with an approximation of mirror descent (blue) with $k_S = 10$ steps of Sinkhorn projection at each gradient update. \textbf{Left:} The value of $f(\gamma_t)$ as a function of $t$. \textbf{Centers:} The $\ell_1$ distance between the two marginals $(\mu_t, \nu_t)$ of $\gamma_t$ and constraints $(\mu, \nu)$ at any given time. \textbf{Right}: For the predicted assignment matrix, thresholded version of $\gamma_t$, the number (compared to $n=1,047$) of predicted positives (solid line) and of true positives (dashed line) for both algorithms.}
    \label{fig:wp-graphs-app}
\end{figure*}

\begin{figure*}[!ht]
    \centering
    \includegraphics[width=\textwidth]{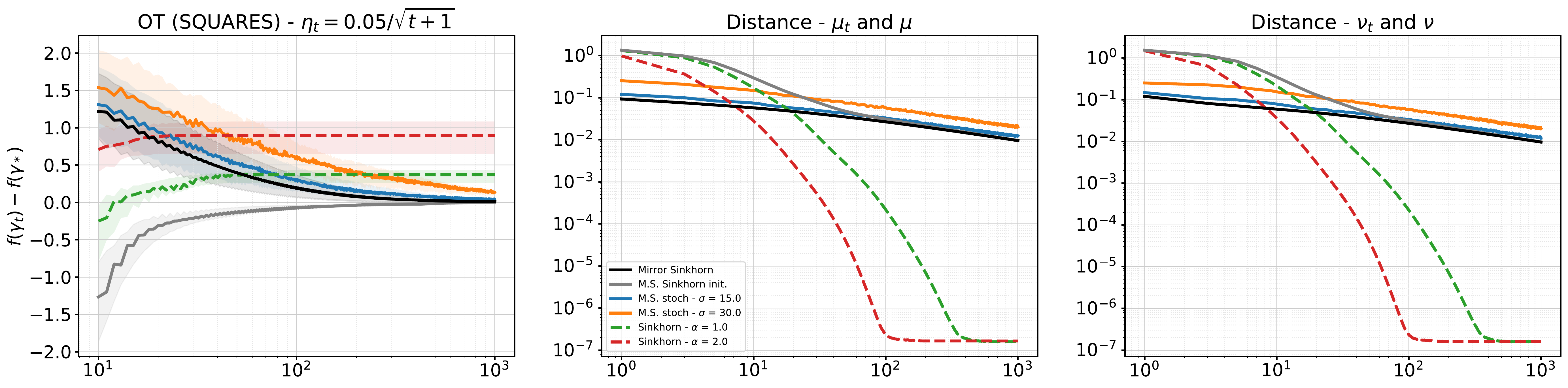}
        \vspace{-0.7cm}
    \caption{Performance of Mirror Sinkhorn on OT for SQUARES data \citep{altschuler2017near}. See Figure~\ref{fig:ot-graphs} and Section~\ref{sec:ot-expe} for details.}
    \label{fig:ot-app-graphs}
\end{figure*}

\end{document}